\theoremstyle{plain}
\newmdtheoremenv[
  linecolor=cyan,
  roundcorner=5pt,
  linewidth=1.5pt,
]{thm}{Theorem}
\newtheorem{lemma}[thm]{Lemma}
\newtheorem{prop}[thm]{Proposition}
\theoremstyle{definition}
\newmdtheoremenv[
  hidealllines=true,
  leftline=true,
  innerleftmargin=10pt,
  innerrightmargin=10pt,
  innertopmargin=0pt,
]{definition}{Definition}
\theoremstyle{remark}
\newcommand{\lo}{\mathcal{L}_{L2O}}
\newcommand{\Ltask}{\mathcal{F}}
\newcommand{\N}{\mathbb{N}}
\newcommand{\R}{\mathbb{R}}
\newcommand{\Esp}{\mathbb{E}}
\icmltitlerunning{A Simple Guard for Learned Optimizers}
\begin{document}

\twocolumn[
\icmltitle{A Simple Guard for Learned Optimizers}
\icmlsetsymbol{equal}{*}

\begin{icmlauthorlist}
\icmlauthor{Jaroslav Vítků}{equal,gai}
\icmlauthor{Isabeau Prémont-Schwarz}{equal,gai}
\icmlauthor{Jan Feyereisl}{gai}

\end{icmlauthorlist}

\icmlaffiliation{gai}{Good AI, Prague, Czechia}

\icmlcorrespondingauthor{Isabeau Prémont-Schwarz}{premont-schwarz@goodai.com}
\icmlcorrespondingauthor{Jaroslav Vítků}{jaroslav.vitku@goodai.com}

\icmlkeywords{Optimization, Learned Optimizers, Convergence Guarantees, Hybrid Optimizers, Guarded Optimizers}

\vskip 0.3in
]


\printAffiliationsAndNotice{\icmlEqualContribution} 
\begin{abstract}
If the trend of learned components eventually outperforming their hand-crafted version continues, learned optimizers will eventually outperform hand-crafted optimizers like SGD or Adam. Even if learned optimizers (L2Os) eventually outpace hand-crafted ones in practice however, they are still not provably convergent and might fail out of distribution. These are the questions addressed here. Currently, learned optimizers frequently outperform generic hand-crafted optimizers (such as gradient descent) at the beginning of learning but they generally plateau after some time while the generic algorithms continue to make progress and often overtake the learned algorithm as Aesop’s tortoise which overtakes the hare. L2Os also still have a difficult time generalizing out of distribution. \cite{heaton_safeguarded_2020} proposed Safeguarded L2O (GL2O) which can take a learned optimizer and safeguard it with a generic learning algorithm so that by conditionally switching between the two, the resulting algorithm is provably convergent. We propose a new class of Safeguarded L2O, called Loss-Guarded L2O (LGL2O), which is both conceptually simpler and computationally less expensive. The guarding mechanism decides solely based on the expected future loss value of both optimizers. Furthermore, we show theoretical proof of LGL2O's convergence guarantee and empirical results comparing to GL2O and other baselines showing that it combines the best of both L2O and SGD and that in practice converges much better than GL2O.
\end{abstract}


\section{Introduction}

An unambiguous trend in machine learning is that different parts of the pipeline are being automatized. Automatized architecture search is now the state of the art for neural networks \citep{efficientnetv2}, learned dynamic learning rates work better than learning rate schedules \citep{MetzAdaptiveLR}, even reinforcement learning algorithms have been learned \citep{evolveRL, metalearncuriosity, Kirsch2020Improving}. There has also been quite some work on learned optimizers (L2O) (cf. section \ref{sec:background}) though so far, due to computational limits, they work only on small datasets and only beat hand-crafted optimizers for the first thousand steps or so (because that is the horizon they are trained on) and do not generalize well. Some, like Richard Sutton\citep{bitterlesson}, argue that as compute becomes cheaper and more accessible, "learned things" will become much better than "handcrafted things". 
However, even if L2O can outperform designed optimizers, they will still have two flaws: they will not be provably convergent, and they might fail totally out of distribution (different dataset, different type of objects being optimized -- what we call optimizees in this paper, different learning horizons, etc.). A guard addresses those shortcomings. 

\begin{figure}
    \centering
    \includegraphics[width=\columnwidth]{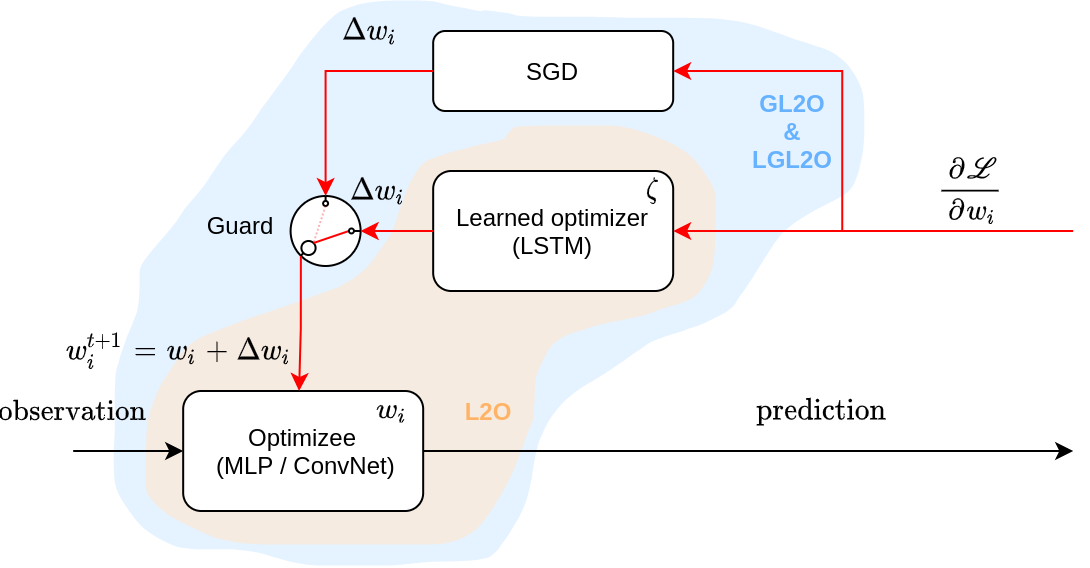}
    \caption{Principle of the guarding mechanism for learned optimizers. A learned optimizer is used to change weights of an optimizee. Guarded learned optimizers (GL2O and LGL2O) add a guard which monitors situations where the optimizer does not perform well and switches to an analytic method (such as SGD) in those cases. This ensures asymptotic convergence of the resulting hybrid optimizer.}
    \label{fig:lgl2o}
\end{figure}

Our main contribution is a guard which takes in as input any blackbox L2O and a provably convergent optimizer and blends them to get the best of both. We prove that our guard keeps the convergence guarantee of the designed optimizer. Then we show in practice that our guard has the desired behaviour, that is, it uses the L2O when the L2O works best (in distribution or in settings where the L2O generalizes well) but correctly switches to the designed optimizer when the L2O underperforms. 

In our experiments, we use \citep{andrychowicz_learning_2016}'s L2O that we train such that it beats other optimizers in the first $\sim$1000 steps, but our contribution, the guard, is independent of the L2O, it can take any L2O as input, and as L2Os get better so will LGL2O. Thus the goal of the paper is not to show that the combination is better than any other existing optimizer, but rather to show that the guard makes the right decisions and that it preserves the advantages of both its input L2O and provably convergent designed optimizer. We show this in our experiments by demonstrating
that LGL2O performs as well as L2O (or better) when L2O outperforms SGD, and as well as SGD (or better) when SGD outperforms L2O. All unguarded L2O approaches currently work only for $\sim$1000 optimization steps in practice when optimizing neural networks. In contrast we show successful optimization up to millions of steps. 

The main reasons we developed a new class of guards, while an existing class of guards \citep{heaton_safeguarded_2020} already exist are:
\begin{itemize}[noitemsep,topsep=0pt,parsep=0pt,partopsep=0pt]
    \item Our guard is conceptually simpler.
    \item Our guard requires fewer hyperparameters.
    \item Our guard requires fewer SGD calls, and those can be done in parallel rather than sequentially.
    \item In practice our guard converges better for neural networks.
\end{itemize}
The first three points are detailed in section \ref{sec:lossguard} while the last point is detailed in section \ref{sec:experiments}.


\section{Related Work}
\label{sec:background}

Learning to Optimize (L2O), popularized by \citet{andrychowicz_learning_2016}, focuses on learning optimization rules using an LSTM network, where gradients, along with other information, are provided on the input to the learned optimizer and updates to the weights of a base network are provided as outputs. Since then, many iterations of similar approaches with various alterations have been proposed \cite{Metz2020,Lv2017,Chen2021}, yet ultimately with the same purpose, i.e. augmenting gradient descent for some practical benefit, such as greater sample efficiency. Historically, however, other works have proposed to use a neural network to train another neural network \cite{prokhorov_adaptive_2002,hochreiter_learning_2001}. For a recent overview of this area, one can refer to \citet{Chen2021}. 

Recently, limitations of L2O approaches have increasingly been analyzed \cite{Metz2020,wu_understanding_2018,metz_overcoming_2021,maheswaranathan_reverse_2020}, revealing a plethora of limitations that hinder the use of L2O methods in practice. Most recently, \citet{heaton_safeguarded_2020} proposed a method for safeguarding the behaviour of any learned optimizer by combining it with stochastic gradient descent (SGD) to confer the hybrid algorithm with convergence guarantees. Our work provides a computationally and conceptually simpler safeguarding mechanism to guarantee convergence. 

Learning to optimize approaches belong, more broadly, to the field of meta-learning and learning to learn in particular \citep{Hospedales2021}. Approaches such as the Badger framework \citep{rosa_badger_2019}, VS-ML \citep{kirsch2020vsml} and BLUR \citep{pmlr-v139-sandler21a} focus on generalizing learning algorithms and architectures and how to discover and train them. The method proposed in this work deals with challenges that must first be overcome to build learning systems that could one day rival and more importantly surpass existing hand engineered solutions, one of the primary motivators for this work. The existing challenges encompass a wide variety of sub-problems. Amongst others, these include a) the ability of the learned algorithm to generalize outside of meta-training distribution and b) the stable long-term (asymptotic) behaviour of the algorithm. These two topics are the subject of this paper.

\section{Loss Guarding}
\label{sec:lossguard}

The principle of guarding is illustrated in Figure \ref{fig:lgl2o}. In L2O, the (neural network-based) learned optimizer receives one gradient per parameter of the optimizee (in what follows we call \textbf{optimizee} whatever gets optimized by the optimizer, in our case it is typically a neural network) and independently proposes a corresponding delta for each parameter that is then applied. In case of guarding (GL2O and LGL2O), the guard looks at parameters proposed by L2O and decides whether to accept them or those suggested by traditional convergence-guaranteed optimizer instead (the fallback update). The guarding mechanism uses the guarding criteria to choose whether to use updates proposed by L2O or the updates proposed by the convergence-guaranteed optimizer (eg. SGD). At every step if the L2O updates are chosen then the L2O weight updates are applied to all the weights, and if the convergence-guaranteed optimizer weight updates are chosen, then those updates are applied to all the weights.

The difference between LGL2O and GL2O is the criterion to decide whether to apply the L2O update or the convergence-guaranteed optimizer update. In the case of GL2O, motivated by the convergence of Cauchy sequences to fixed points in a complete space, the update proposed by L2O is tested by applying normal SGD to it. If the L2O update makes the SGD step size generally smaller (smaller than a Cauchy sequence which tracks the size of accepted steps), then the L2O update is accepted, if not it is rejected and an SGD update is used instead. In contrast, LGL2O, motivated by the idea that convergence in the loss-space implies convergence in the weight-space when the loss function is continuous (and locally convex), simply compares the loss of the L2O update versus the loss of a convergence-guaranteed optimizer like SGD and simply implements the update with the lowest loss. That is at optimization step $k$, if $y_k$ is the point proposed by L2O and $z_k$ is the point proposed by the convergence-guaranteed fallback optimizer, then the criterion to determine the update is 
\begin{align}
    \label{lgl2ocriterion}
    \Ltask(y_k) < \Ltask(z_k) , 
\end{align}
where $\Ltask$ is the loss function.
If the criterion is passed, the algorithm updates to $y_k$ otherwise $z_k$. 

This means that for every weight update of the optimizee, GL2O needs to make one extra call of obtaining the gradients of the optimizee (on the proposal of L2O) and it must be made sequentially, after making the initial one and running L2O. This significantly increases the time complexity of GL2O compared to LGL2O. The logic of GL2O is also thus more complicated.  

\begin{algorithm}[htb]
    \caption{Loss Guarded L2O with (deterministic) gradient descent \label{alg:lossguard}}
    \begin{algorithmic}[1]
        \STATE \textbf{Given} task loss function: $\Ltask$

        \STATE \textbf{Given} L2O operator:  $\lo$

        \STATE \textbf{Given} L2O weights $\{\zeta\}$
                                \hfill {\it $\vartriangleleft$ Take from Meta-Training}

        \STATE \textbf{Given} initial state $x^1 \in \R^n$
                                \hfill {\it $\vartriangleleft$ Initialize iterate}
    \FOR{$k=1,2,\ldots$}

        \STATE $y^k \leftarrow \lo(x^k; \ \zeta)$
                                \hfill $\vartriangleleft$ {\it L2O Update}
        \STATE $z^k \leftarrow x^k - \lambda \nabla \Ltask(x^k)$
                                \hfill $\vartriangleleft$ {\it  Fallback Update}

    \STATE {\bf if} $ \Ltask(y^k) < \Ltask(z^k)$ {\bf then}
                                \hfill {\it $\vartriangleleft$ Safeguard Check}

    \STATE \hspace*{10pt} $x^{k+1} \leftarrow y^k$
                                \hfill {\it $\vartriangleleft$ L2O Update}

    \STATE {\bf else}

    \STATE \hspace*{10pt} $x^{k+1} \leftarrow z^k$
                                \hfill {\it $\vartriangleleft$ Fallback Optimizer Update}

    \STATE {\bf end if}

    \ENDFOR
    \end{algorithmic}
\end{algorithm}

\begin{algorithm}[htb]
    \caption{Loss Guarded L2O with stochastic gradient descent \label{alg:sgdlossguard}}
    \begin{algorithmic}[1]
        \STATE \textbf{Given} mini-batch loss function: $\Ltask$

        \STATE \textbf{Given} L2O operator:  $\lo$

        \STATE \textbf{Given} L2O weights $\{\zeta^k\}$
                                \hfill {\it $\vartriangleleft$ Take from Meta-Training}

        \STATE \textbf{Hyperparameters: } $n_t, n_c, L \in \N^3$

        \STATE \textbf{Given} initial state $x^1 \in \R^n$
                                \hfill {\it $\vartriangleleft$ Initialize iterate}
        \STATE $k \leftarrow 1$
    \WHILE{$k < L$}
        \STATE Sample $n_t$ train mini-batches $\mathcal{B}_t = [b_1, \ldots, b_{n_t}]$
        \STATE Sample $n_c$ validation mini-batches $\mathcal{B}_v = [v_1, \ldots, v_{n_c}]$
        \STATE $y^k \leftarrow x^k$ \hfill $\vartriangleleft$ {\it L2O Init.}
        \STATE $z^k \leftarrow x^k$ \hfill $\vartriangleleft$ {\it SGD Init.}
        \FOR{$i\in [0, \ldots, n_t - 1]$}
            \STATE $y^{k+i+1} \leftarrow \lo(y^{k+i}, b_{i+1}; \ \zeta)$
                                \hfill $\vartriangleleft$ {\it L2O Update}
            \STATE $z^{k+i+1} \leftarrow z^{k+i} - \lambda_{k+i} \nabla \Ltask(z^{k+i}, b_{i+1})$
                                \hfill $\vartriangleleft$ {\it  Fallback Update}
        \ENDFOR
        
    \STATE {\bf if} $ \frac{1}{n_c}\sum_{j=1}^{n_c}\Ltask(y^{k+n_t}, v_j) < \frac{1}{n_c}\sum_{j=1}^{n_c}\Ltask(z^{k+n_t}, v_j) $ {\bf then}
                                \hfill {\it $\vartriangleleft$ Safeguard Check}

    \STATE \hspace*{10pt} $x^{k+n_t} \leftarrow y^{k+n_t}$
                                \hfill {\it $\vartriangleleft$ L2O Update}

    \STATE {\bf else}

    \STATE \hspace*{10pt} $x^{k+n_t} \leftarrow z^{k+n_t}$
                                \hfill {\it $\vartriangleleft$ Fallback Optimizer Update}
    
    \STATE {\bf end if}
    \STATE $k\leftarrow k + n_t$
    \ENDWHILE
    \end{algorithmic}
\end{algorithm}

In algorithm \ref{alg:sgdlossguard}, $n_t$ is number of sequential application of L2O (and in parallel, on the same $n_t$ mini-batches, SGD) before choosing whether to use L2O updates or the fallback SGD updates using criterion \ref{lgl2ocriterion}. $n_c$ is the number of mini-batches (drawn from the training data) used to approximate the loss in criterion \ref{lgl2ocriterion}. In practice we want to choose $n_c \simeq n_t$ for algorithmic speed so that not too many loss function evaluations are needed per optimization step. If we choose $n_c = n_t$, then we need only two loss function evaluation per optimization step while still being able to approximate the total loss function in criterion \ref{lgl2ocriterion} with an arbitrary number $n_c$ of mini-batches. In all our experiments, both $n_t$ and $n_c$ are chosen to be 10. These are the only two hyperparameters of the algorithm \ref{alg:sgdlossguard} (except for $L$, which is how long one chooses to optimize for). This compare favourably with GL2O which has the choice of 5 possible sequence types, each choice then with typically 2 hyperparameters to tune.

\begin{thm} \label{thm: lossguard-convergence}
Let $\Ltask$ be a continuous loss function which is $\mu$-strongly convex\footnote{We remark that any convex function can be turned into a strongly convex one simply by adding an $L_2$ regularization. In the non-convex case, we get convergence to a local minimum instead of the global one. } and $L$-smooth and let $w^*$ be it's global minimum. Let $w_{i\in\N}$ be a sequence of points obtained from applying the Loss-Guarded L2O algorithm with gradient descent or stochastic gradient as the guarding algorithm. In the case of stochastic gradient descent, we assume that in expectation, the stochastic gradient $\nabla_{mb} \Ltask(w)$ is equal to the true gradient, 
$$\Esp(\nabla_{mb} \Ltask(w)) = \nabla \Ltask(w) , $$ and that the variance of the stochastic gradient around the true gradient is bounded. Then given a constant learning rate $0<\lambda<\min(\frac{2}{L},2\mu)$ for gradient descent or a decaying learning rate $\lambda_i\propto\frac{1}{i_0+i}$ for SGD, the sequence converges to the minimum, i.e. $$\substack{\lim \\ i\rightarrow \infty} w_i = w^* .$$ 
\end{thm}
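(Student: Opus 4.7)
The plan is to reduce convergence of the guarded iterates to the well-known convergence of the pure fallback optimizer, exploiting the defining property of criterion~(\ref{lgl2ocriterion}): every accepted update satisfies
\begin{equation*}
\Ltask(x^{k+1}) \leq \Ltask(z^k),
\end{equation*}
where $z^k$ is the fallback GD/SGD step from $x^k$. Together with the standard strong-convexity bound $\tfrac{\mu}{2}\|x^k - w^*\|^2 \leq \Ltask(x^k) - \Ltask(w^*)$, it therefore suffices to show that the surrogate sequence $\Ltask(z^k)$ contracts toward the optimum; convergence in loss then transfers automatically to convergence of the iterates.

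For the deterministic case (Algorithm~\ref{alg:lossguard}), I would combine the $L$-smoothness descent lemma $\Ltask(z^k) \leq \Ltask(x^k) - \tfrac{\lambda(2-\lambda L)}{2}\|\nabla\Ltask(x^k)\|^2$ with the $\mu$-strong convexity gradient bound $\|\nabla\Ltask(x^k)\|^2 \geq 2\mu(\Ltask(x^k) - \Ltask(w^*))$ to obtain a one-step contraction $\Ltask(z^k) - \Ltask(w^*) \leq \alpha\bigl(\Ltask(x^k) - \Ltask(w^*)\bigr)$ with $\alpha \in (0,1)$ for $\lambda$ in the stated range. Chaining with the guarding inequality yields geometric decay of the loss, and the strong-convexity lower bound then closes the argument to $x^k \to w^*$. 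For the stochastic case (Algorithm~\ref{alg:sgdlossguard}) the same outline runs in expectation: the classical Robbins--Monro SGD convergence theorem (strongly convex and $L$-smooth loss, unbiased mini-batch gradients with bounded variance, step sizes $\lambda_i \propto 1/(i_0+i)$) gives $\Esp[\Ltask(z^{k+n_t}) - \Ltask(w^*)] \to 0$; an expected-value analogue of the guarding inequality should then force $\Esp[\Ltask(x^{k+n_t})] \to \Ltask(w^*)$, and strong convexity again converts this into $\Esp\|x^k - w^*\|^2 \to 0$.

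The main obstacle is justifying that expected-value guarding inequality in the stochastic setting: Algorithm~\ref{alg:sgdlossguard} only compares empirical losses on finitely many validation mini-batches, so it can occasionally select an L2O branch whose \emph{true} loss is worse than the fallback SGD branch. Establishing $\Esp[\Ltask(x^{k+n_t})] \leq \Esp[\Ltask(z^{k+n_t})]$ cleanly will require either a conditional-unbiasedness argument exploiting the independence of the validation batches from the two competing iterates, or a concentration bound controlling how often and by how much the guard picks the inferior branch. Once this step is secured, the remainder is a routine plug-in of standard stochastic-approximation machinery, and the translation from expected loss convergence to almost-sure or $L^2$ convergence of the iterates is again provided by $\mu$-strong convexity.
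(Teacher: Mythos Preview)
Your deterministic argument is essentially the paper's: both combine the $L$-smoothness descent lemma for a single GD step with the Polyak--\L{}ojasiewicz inequality (derived from $\mu$-strong convexity) and the guarding inequality $\Ltask(x^{k+1})\le\Ltask(z^k)$ to obtain a geometric contraction of $\Ltask(x^k)-\Ltask(w^*)$, then read off $x^k\to w^*$ from strong convexity.

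In the stochastic case your proposal diverges from the paper in two respects. First, the obstacle you flag---that Algorithm~\ref{alg:sgdlossguard} compares empirical rather than true losses---is simply not addressed by the paper; its proof (Proposition~\ref{prop:sgdlsmooth} and Theorem~\ref{thm:sgdconv}) assumes criterion~(\ref{lgl2ocriterion}) holds for the \emph{true} loss $\Ltask$, and elsewhere the text explicitly acknowledges that evaluating on finitely many mini-batches is an approximation that can break the guarantee near the optimum. So you are attempting to prove something strictly stronger than the paper does; for the theorem as stated you may take the idealized comparison for granted.

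Second, your black-box appeal to Robbins--Monro is the one genuine gap. The points $z^{k+n_t}$ are \emph{not} a pure SGD trajectory: each block of $n_t$ fallback steps restarts from the guarded iterate $x^k$, so the classical theorem does not apply to $\{z^{k+n_t}\}_k$ and cannot yield $\Esp[\Ltask(z^{k+n_t})]\to\Ltask(w^*)$ independently of the behaviour of $\{x^k\}$. The paper instead replays your deterministic one-step argument in expectation: from $L$-smoothness and the unbiasedness/variance assumptions it gets
\[
\Esp\!\bigl[\Ltask(z^k)\mid x^k\bigr]\;\le\;\Ltask(x^k)\;-\;\tfrac{\lambda(2-\lambda L)}{2}\,\|\nabla\Ltask(x^k)\|^2\;+\;\tfrac{L\lambda^2}{2}\,\sigma^2,
\]
applies the guarding inequality and PL to obtain a recursion for $\Esp[\Ltask(x^{k+1})-\Ltask(w^*)]$ with an additive $O(\lambda_k^2)$ noise term, and then closes by an explicit induction showing $\Esp[\Ltask(x^k)-\Ltask(w^*)]\le C/(k+i_0)$ for $\lambda_k\propto 1/(k+i_0)$. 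Replacing your black-box step with this per-step computation fixes the argument and brings it in line with the paper.
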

\begin{proof}
This is simply a combination of Theorems \ref{thm:conv} and \ref{thm:sgdconv} proven in Appendix.
\end{proof}

Because the LGL2O criterion depends on comparing the proposed points coming from the L2O and SGD on the whole loss function, but in all our experiments the loss function is the sum of the individual losses on many dataset points which would take considerable compute to evaluate, in practice we approximated the loss function in the criterion with 10 mini-batches of data. There is a risk in making this approximation: if the error on the loss from approximating it with a limited number of mini-batches becomes similar to the difference in the loss values of the SGD and L2O update proposals, then there is a risk that the algorithm will choose the incorrect update and convergence will not be guaranteed. This is something which risks happening near a local optimum, in which case one should increase the number of mini-batches used in the evaluation of the loss criterion to lower the approximation error. 

\section{Experiments}
\label{sec:experiments}

\begin{figure}[htb]
    \includegraphics[width=\columnwidth]{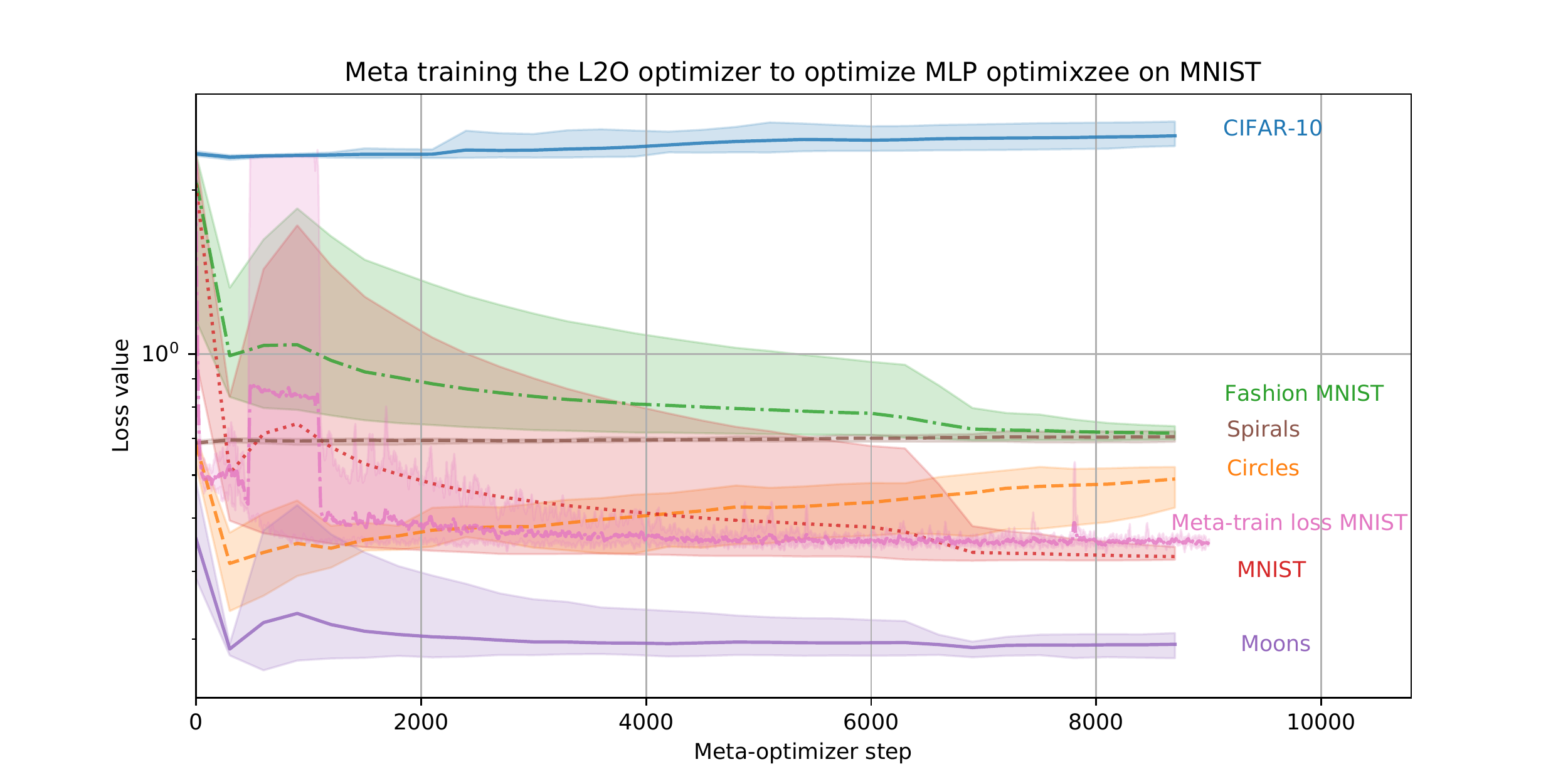}
    \caption{Outer-loop convergence of the learned optimizer on the MNIST dataset. The L2O method achieves reasonable performance on the MNIST dataset. It generalizes to FashionMNIST \& Moons \& Circles datasets, while the MLP seems too small to have sufficient capacity to solve the Spirals \& CIFAR-10. Mean and min-max ranges are across 5 runs.}
    \label{fig:meta-learning}
\end{figure}

This chapter compares the proposed LGL2O with the original GL2O, non-guarded L2O and baseline handcrafted algorithms. We first compare them in distribution, that is when L2O is used like it was meta-trained, on the same dataset, and with the same optimizee. In all our experiments, we use an L2O that was meta-trained on MNIST on a small fully-connected MLP for roughly 10k meta-optimizer steps. Then we follow with out of distribution experiments. First with only the dataset being out of distribution (other than MNIST), then with only the optimizee being out of distribution (ConvNets instead of MLPs), and finally both.

To show that the success of our guard is not dependent on the above specific L2O, we also meta-trained another L2O in a different context (Convnet on CIFAR-10) and show in appendix \ref{sec:different} that our guard works just as well with this other learned optimizer as with the MLP-MNIST-metatrained L2O we used in this section.

The experiments were conducted on publicly available datasets, namely MNIST \citep{mnist}, FashionMNIST \citep{fashion}, CIFAR10 \citep{cifar10}, TinyImagenet\footnote{TinyImagenet dataset is publicly available from Kaggle competition website at \url{https://www.kaggle.com/c/tiny-imagenet/data}} - a subset of Imagenet dataset \cite{imagenet} and simple datasets from the Scikit-learn library \citep{scikit-learn}.

The purpose of these experiments is to illustrate a common property of meta-learned algorithms: that they tend to converge faster than analytic algorithms and then they plateau. It is expected that the guarding mechanism in GL2O will eventually take over and switch to the SGD-based guard, thus assuring the asymptotic convergence towards the optimal solution.

The learned optimizer (and it's weights) is identical in all experiments and consists of an LSTM \citep{lstm} with 2 hidden layers of 20 cells each and a linear output layer which was meta-trained with a rollout-length of 100 steps to optimize an MLP on the MNIST dataset. Pre-processing of the input is used as described in \citep{andrychowicz_learning_2016}.

First, the optimizer was meta-trained to optimize the optimizee on the MNIST dataset as shown in the Figure \ref{fig:meta-learning}. The optimizee used for meta-training is an MLP with 1 hidden layer with 20 neurons. It uses sigmoid activations in hidden layer(s), softmax on the output layer and a negative log-likelihood loss. 

\begin{table}[h!]
\begin{tabular}{llll}
\toprule
\multicolumn{2}{l}{In Distribution?} & \multicolumn{2}{l}{Experiment Type} \\ \cmidrule(lr){1-2}\cmidrule(lr){3-4}
\textbf{Dataset}          & \textbf{Optimizee}         & \textbf{Dataset}               & \textbf{Optimizee}   \\ \toprule
yes              & yes               & MNIST                 & MLP         \\
yes              & no                & MNIST                 & Conv        \\
no               & yes               & Spirals \& Circles    & MLP         \\
no               & no                & CIFAR-10              & Conv        \\ 
no               & no                & TinyImagenet50        & Conv         \\ \bottomrule
\end{tabular}
\caption{Overview of the experiments: after the meta-training phase, the ability of the hybrid optimizer LGL2O is evaluated on long rollouts, on in-distribution and out-of-distribution datasets and optimizees.}
\label{table:exp}
\end{table}

After the meta-learning phase, the meta-testing phase begins. At that point, the learned optimizer works in an inference regime. In all these experiments we explore the behaviour of LGL2O ("LGL2O (ours)") in the plots) is compared with Guarded L2O from \citet{heaton_safeguarded_2020} (GL2O in the plots), vanilla L2O (L2O in the plots) and SGD without momentum (SGDnm) (which the fallback optimizer which we use inside LGL2O). In addition, each figure plots "LGL2O use\_l2o" which tracks whether the L2O update (use\_l2o=1) or the SGD update (use\_l2o=0.5) was chosen by the loss-guarding criterion (\ref{lgl2ocriterion}) on this optimization step. 

The ability to generalize to longer rollouts and to out-of-distribution dataset and network architectures is systematically evaluated on the following experiments, the experiments are described in table \ref{table:exp}. In all graphs, the data are collected from 5 independent runs with different seeds. The dark lines trace the mean of the 5 seeds and the shading the minimum and maximum.

In practice (in all experiments shown here) the following schedule for guarding in LGL2O is used:
\begin{itemize}
\item make 10 optimizer steps on 10 consecutive mini-batches (for both L2O and guard),
\item compute loss of the resulting optimizee as an average of 10 unseen mini-batches (for both optimizees produced by L2O and guard).
\end{itemize}

This scheme has two motivations. First: a typical learned optimizer behaves differently than SGD, it usually starts with a form of triangulation of the loss landscape, during which the loss increases considerably (see the initial steps in Figure \ref{fig:spirals_mlp}) and then the loss starts decreasing. We observed that this stage is necessary for L2O to work. This is the reason why each of the optimizers do 10 consecutive steps first. The second part (evaluating loss over 10 "test" mini-batches) aims to inhibit influence of noise in the evaluation. Without averaging, LGL2O exhibited very unstable behaviour caused by switching from the guard back and forth inappropriately.  

\subsection{In-distribution dataset and optimizee}
\label{sec:indistribution}

\begin{figure}[htb]
    \centering
    \includegraphics[width=\columnwidth]{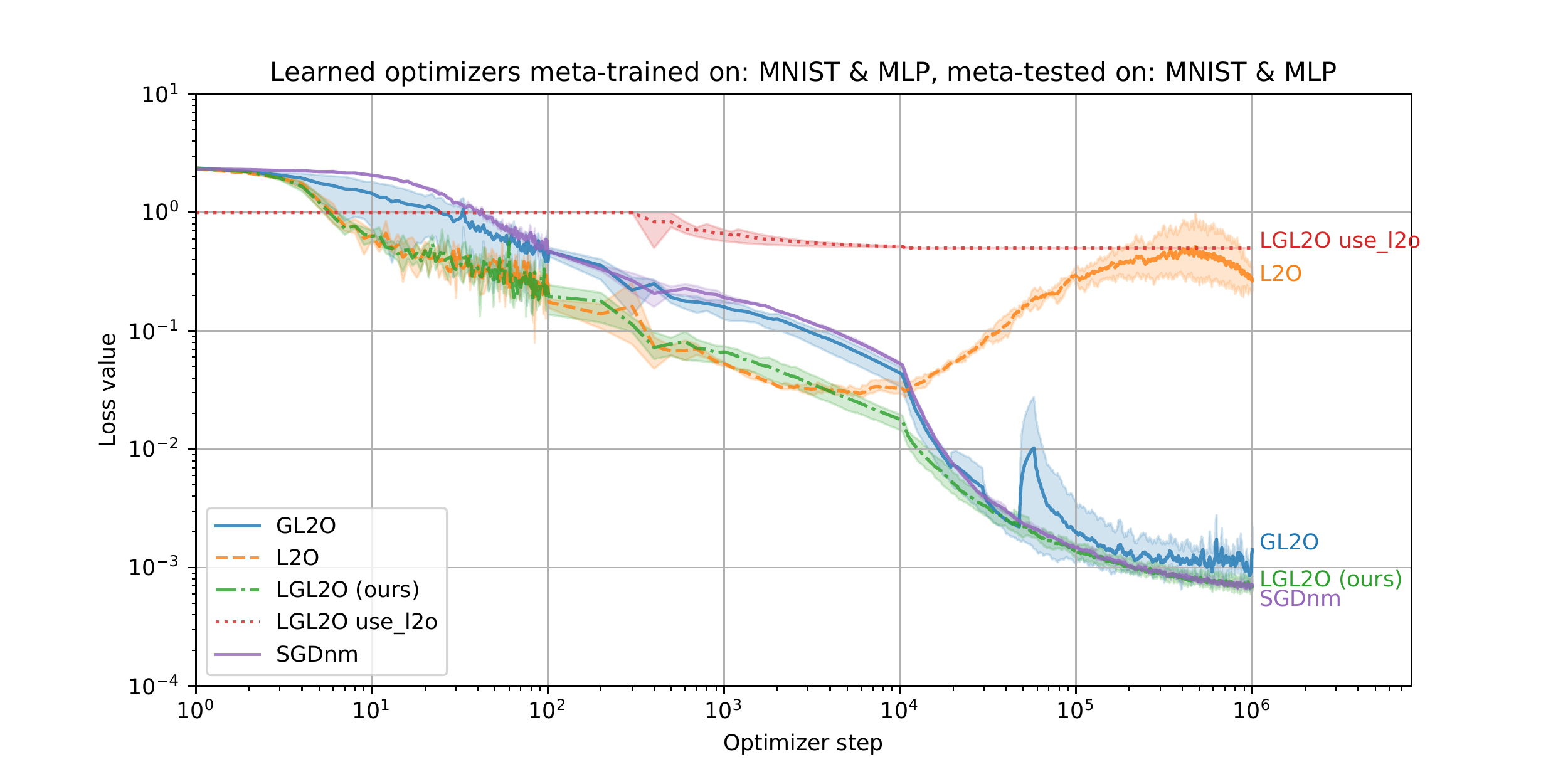}
    \caption{\textbf{In-distribution dataset and optimizee}. A typical behaviour of L2O is shown here: at beginning, it converges faster than other algorithms, however, after around 1000 steps it starts diverging. Compared to this, the LGL2O algorithm detects that L2O performs worse than the fallback optimizer, so it starts using the fallback optimizer (the `use\_l2o` line goes towards value 0.5, which corresponds to only using the fallback) and thus stays as good or better than either components (L2O and SGDnm) at all times. As a result, LGL2O combines benefits of both worlds: thanks to the L2O optimizer, it converges quickly at the beginning, while it preserves asymptotic convergence of the fallback optimizer (SGD without momentum here). Note: on all graphs a moving window averaging is used, which starts at Optimizer step (x-axis) 300. The following is common for all the figures below: \textbf{LGL2O:} loss of our LGL2O algorithm, \textbf{GL2O:} loss of the GL2O algorithm of \citep{heaton_safeguarded_2020}, \textbf{L2O:} loss of vanilla L2O, \textbf{SGDnm:} loss of SGD without momentum, \textbf{LGL2O use\_l2o:} indicator function which indicates whether on this step LGL2O used the L2O update (=1) or the SGD update (=0.5)}
    \label{fig:mnist_mlp}
\end{figure}

This experiment uses the dataset and optimizee that was used during the meta-training phase. Therefore it evaluates the asymptotic behaviour of the learned L2O optimizer. It illustrates the typical flaw of learned optimizers well, which is to say that they perform extremely well at the beggining of optimization but then start diverging.

Figure \ref{fig:mnist_mlp} shows that L2O converges quickly at first, but eventually the loss starts diverging. The LGL2O algorithm solves this by detecting when the handcrafted optimizer would be preferable and switching to it. It can be seen that the switching is not definitive, once L2O performs better than the guard, it can be used again instead of the guard. As expected, the resulting hybrid algorithm steadily converges towards the optimum.

The motivation of guarded learned optimizers is to combine the best of both worlds: to achieve quick convergence at the beginning and then be able to keep converging towards the optimum. LGL2O shows this kind of behaviour in Figure~\ref{fig:mnist_mlp}.

We also observe that while GL2O prevents the divergence of the L2O, it does at the cost of much of the performance gains of L2O at the beginning of optimization (between steps 0 and 100, where GL2O is better than SGDnm but much worse than vanilla L2O).

\subsection{Out-of distribution dataset, in-distribution optimizee}
\label{sec:outdistribution}

\begin{figure}[htb]
    \centering
    \includegraphics[width=\columnwidth]{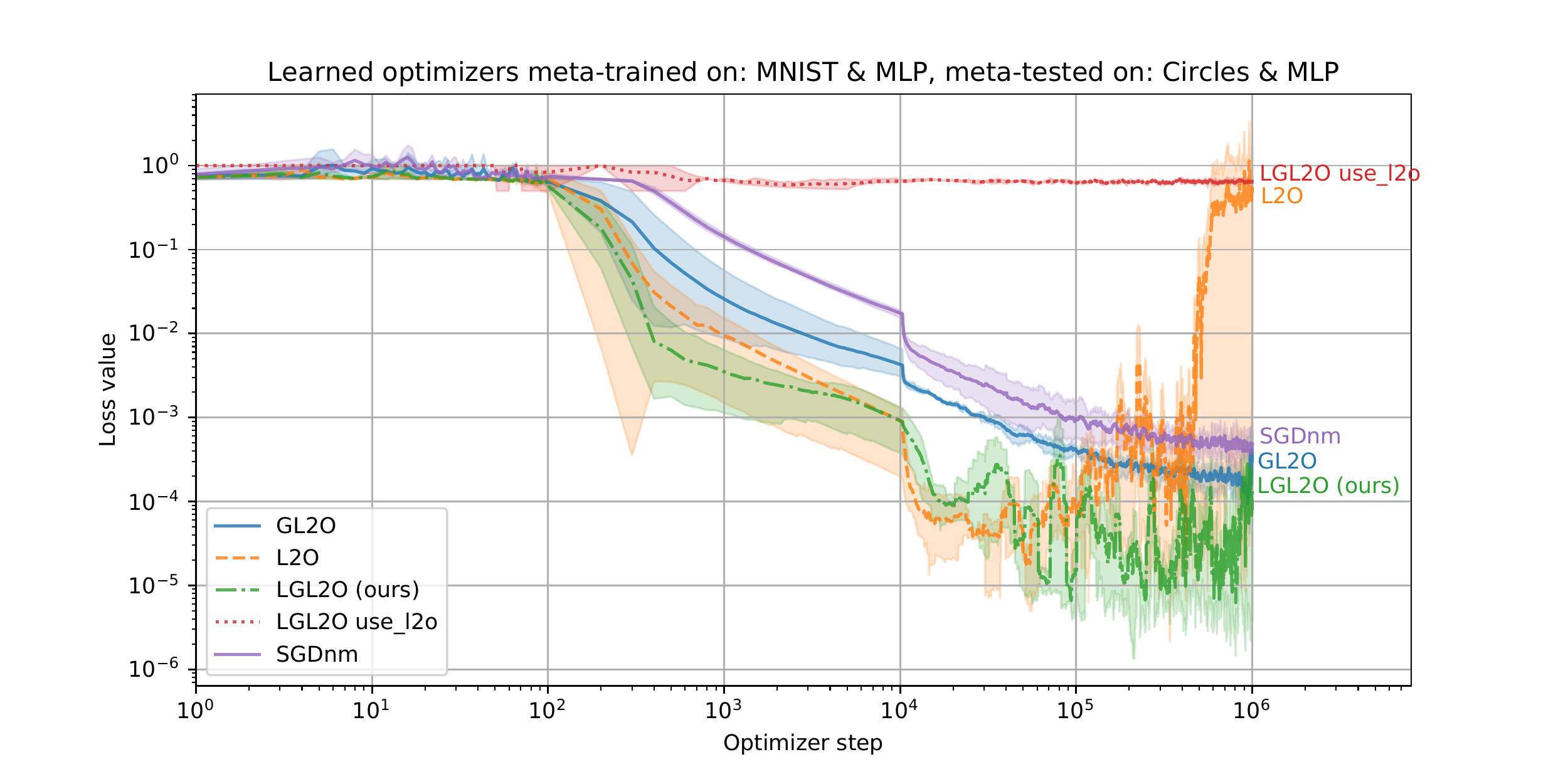}
\caption{\textbf{Out-of-distribution dataset, in-distribution optimizee.} Performance of LGL2O is comparable to L2O at the beginning, where all the optimizers converge quickly. However, L2O starts diverging later in the rollout, while LGL2O correctly switches towards the guard. Again we see that GL2O prevents divergence but at the cost of underperforming compared to L2O at the beginning. We suspect\footnotemark that the reason LGL2O starts diverging after 20k steps is because we evaluate the criterion on only 10 mini-batches, and when the loss is already so low, the variance between different groups of 10 mini-batches becomes larger than the difference between the losses of the SGD proposed update and L2O proposed update, so our implementation of the algorithm starts incorrectly using the divergent L2O too often.}
    \label{fig:circles_mlp}
\end{figure}
\footnotetext{In fact, we have since run experiments, shown in Appendix \ref{sec:stability}, showing that in accordance with our suspicions, increasing the values of the hyperparameters $n_t$ and $n_c$ indeed solves the instability.}

\begin{figure}[htb]
    \centering
    \includegraphics[width=\columnwidth]{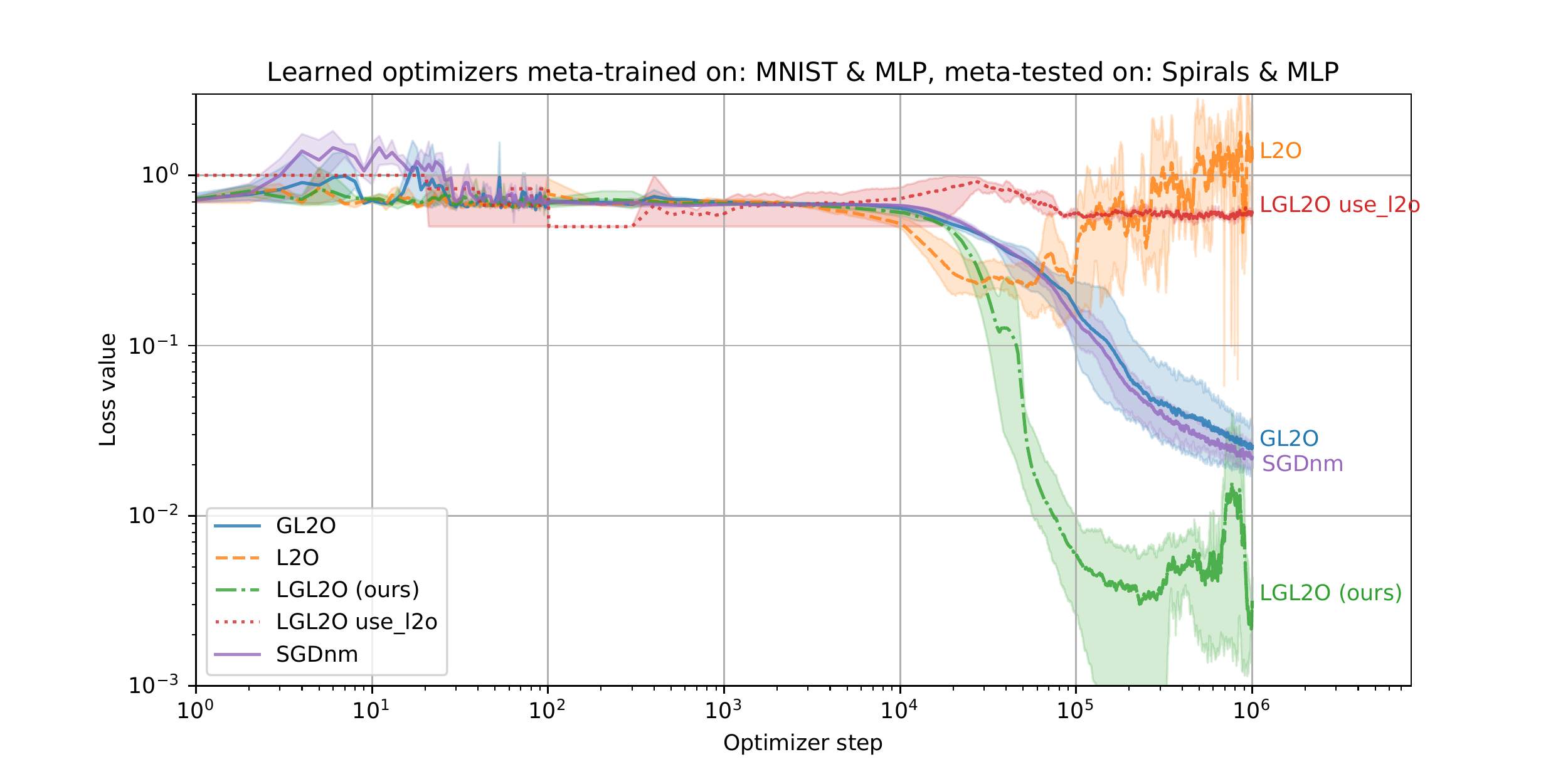}
    \caption{\textbf{Out-of-distribution dataset, in-distribution optimizee.} MLP optimizee optimized using the Spirals dataset. The dataset is relatively challenging for MLP with 1 small hidden layer. It can be seen that the SGD baseline without momentum gets stuck at a suboptimal solution forever and Adam gets stuck there for a very long time. While all L2O-based optimizers go past this optimum, it is noticeable that the guarding mechanism in the original GL2O algorithm slows down its convergence. Compared to that, LGL2O converges as fast as L2O while maintaining reasonable asymptotic stability. And more impressively, after 50k steps, LGL2O significantly outperforms both it's constituent parts (L2O and SGDnm) showing that a judicious switching between the two provides synergistic gains.}
    \label{fig:spirals_mlp}
\end{figure}

This experiment evaluates performance of the L2O optimizer on optimizing an MLP on out-of-distribution, simple 2D datasets: Circles \& Spirals.

On the Circles dataset, LGL2O behaves as expected and keeps converging in later stages of the rollout. Here, LGL2O is more noisy in the later stages than GL2O. One hypothesis is that this could be caused by noise in the optimizer fitness evaluation and could be fixed by averaging loss values over more mini-batches\footnote{Both optimizers do $n_t=10$ consecutive steps, then their loss is computed as a mean over $n_c=10$ new mini-batches.}. In fact, this suggest a future improvement we could make to the LGL2O algorithm \ref{alg:sgdlossguard}, to adaptively increase $n_t$ and $n_c$ (keeping the ratio $n_t:n_c$ constant not to increase the number of function evaluations per step) when the mean plus or minus the standard deviation of $\left\{\Ltask(y^{k+n_t}, v_j)\right\}_{k\in[1,n_c]}$ and $\left\{\Ltask(z^{k+n_t}, v_j)\right\}_{k\in[1,n_c]}$ start to overlap in line 16 of algorithm \ref{alg:sgdlossguard}. Indeed, in supplementary experiments in Appendix \ref{sec:stability}, we show that having larger $n_t$ and $n_c$ fixes the late stage noisiness of LGL2O on the Circles dataset.
 
The experiment on Spirals (see Figure \ref{fig:spirals_mlp}) illustrates the incredible synergy operated by LGL2O. By judiciously alternating between its constituent parts (L2O and SGDnm), LGL2O is able to significantly outperform both does constituent parts after 50k optimization steps. 
\subsection{In-distribution dataset, out-of-distribution optimizee}
\label{sec:mnist_conv}

\begin{figure}[htb]
    \centering
    \includegraphics[width=\columnwidth]{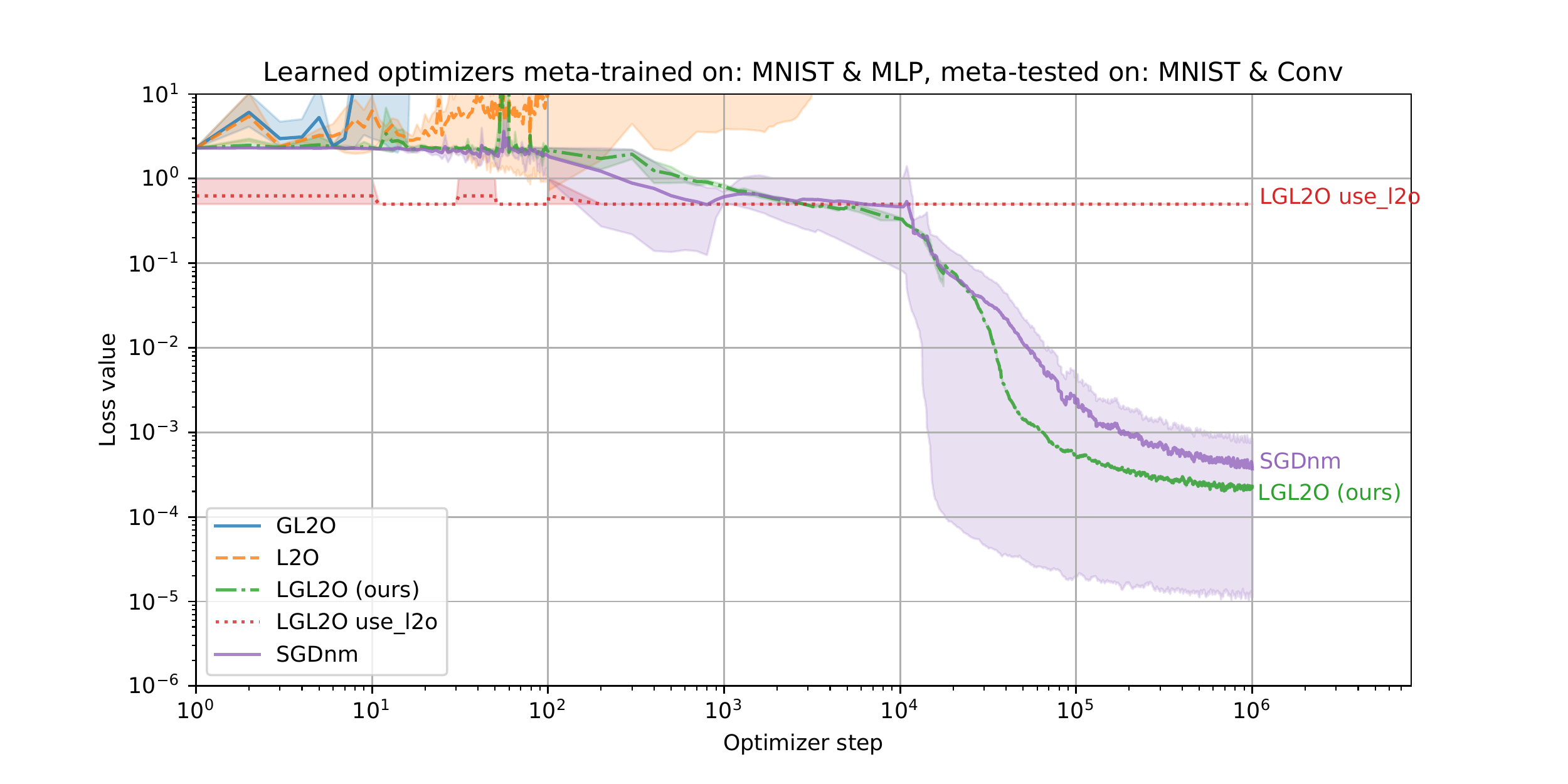}
    \caption{\textbf{In-distribution dataset, out-of-distribution optimizee.} L2O was meta-trained to optimize an MLP, therefore it does not generalize to optimization of CNNs here. LGL2O detects bad behaviour of the learned optimizer and switches to the fallback optimizer, which then performs similarly to the baseline, as one would hope.}
    \label{fig:mnist_conv}
\end{figure}

This experiment illustrates performance of L2O and LGL2O on MNIST and a convolution neural network (CNN) as an optimizee. The optimizee is a 3-layer CNN with $number\_of\_channels=(8,16,32)$, $kernel\_sizes=(5,3,3)$ and $strides=(2,2,2)$, with a fully connected final layer.

Since the CNN optimizee is sufficiently different from the MLP that L2O was meta-trained on, L2O diverges very quickly, as seen in  Figure \ref{fig:mnist_conv}. GL2O diverged as well\footnote{This might be fixed by further tuning of the guarding algorithm hyperparameters but our grid search did not find good hyperparameters for GL2O in this case.}, while LGL2O engages the guard soon enough and manages to converge steadily tracking it's fallback optimizer (SGDnm) as one would hope in the case where the learned optimizer is not helpful.

It is worth noting that while GL2O is guaranteed to converge asymptotically, practical performance in this case very bad. LGL2O however does the best one could expect in a bad situation.

\subsection{Out-of-distribution dataset, out-of-distribution optimizee}
\label{sec:cifar_conv}

This setup evaluates L2O and LGL2O on an out-of-distribution dataset (CIFAR-10) and an out-of-distribution optimizee (ConvNet). Results can be seen in Figure \ref{fig:cifar_conv}.

\begin{figure}[htb]
    \centering
    \includegraphics[width=\columnwidth]{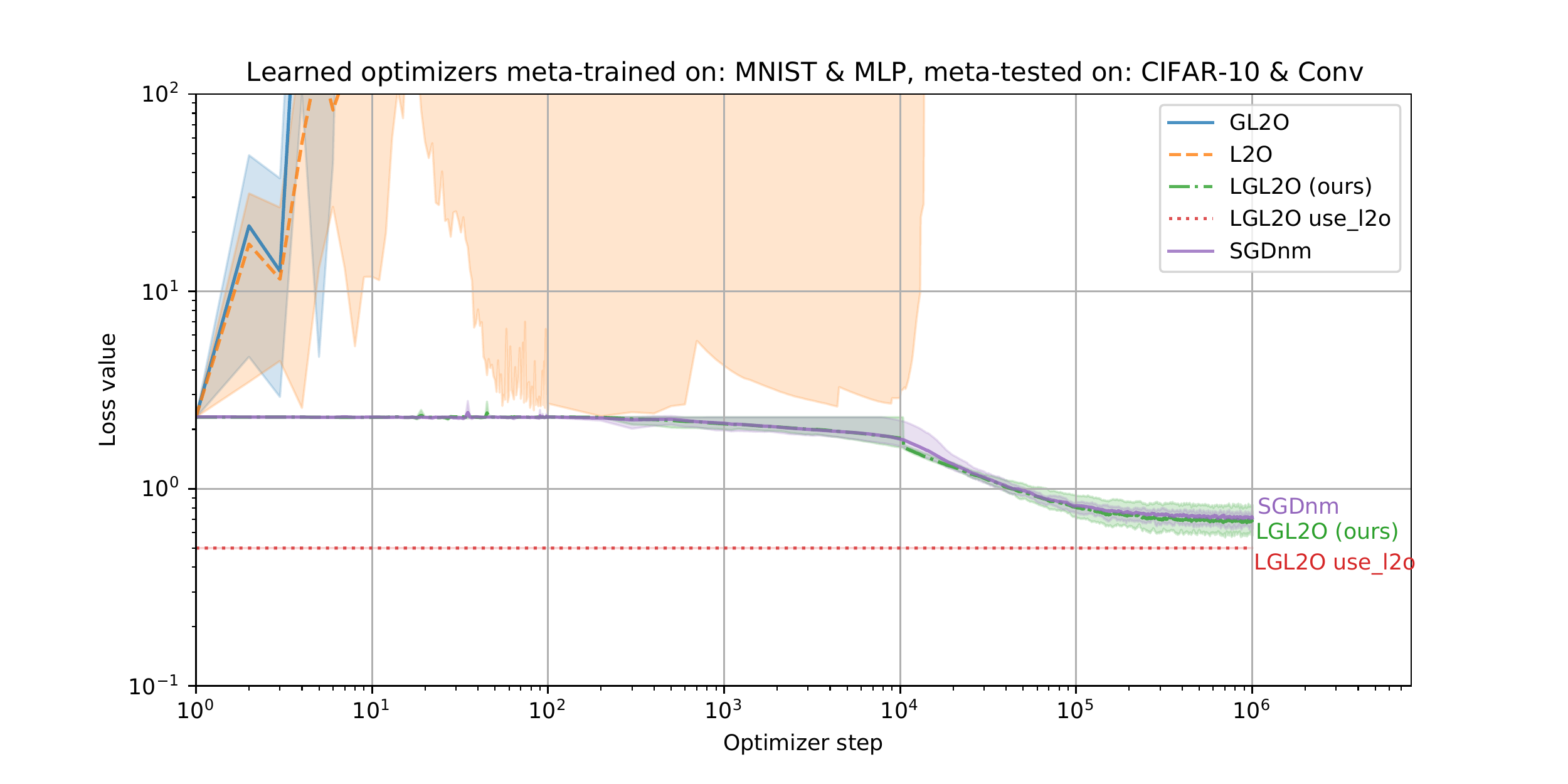}
    \caption{\textbf{Out-of-distribution dataset and out-of-distribution optimizee.} L2O diverges very quickly (due to being out of distribution for both the dataset and optimizee), while LGL2O converges using the fallback optimizer from the beginning.}
    \label{fig:cifar_conv}
\end{figure}

In this case, again, L2O and GL2O completely fail (diverge very quickly to high loss values). While LGL2O uses its fallback optimizer from the beginning and saves the stable convergence.

\subsection{Out-of-distribution dataset (Imagenet), out-of-distribution optimizee}
\label{sec:imagenet_conv}

The last type of setup evaluates L2O, GL2O and our LGL2O on an out-of-distribution dataset (TinyImagenet) and an out-of-distribution optimizee (ConvNet). For the results, a subset of the dataset, which contains randomly chosen 50 (out of full 200) classes was used. Considering the nature of the dataset, a deeper convolutional network was used this time, namely: the optimizee is a 5-layer CNN with $number\_of\_channels=(8,16,32,32,64)$, $kernel\_sizes=(3,3,3,3,1)$ and $strides=(2,2,2,1,1)$, with a fully connected final layer. In order to achieve a stable convergence of the baseline (and the fallback optimizer at the same time), a grid-search for algorithm hyper-parameters was concluded (see the Fig.\ref{fig:tinyimagenet_conv_grid} for reference). The resulting hyper-parameters used for the TinyImagenet experiment are: $learning\_rate=0.01$ and  $learning\_rate\_decay=50000$, these are used for SGDnm baseline and fallback optimizers for LGL2O and GL2O algorithms. 

Results can be seen in Figure \ref{fig:tinyimagenet_conv}, the L2O diverges towards very high loss values, the GL2O diverges immediately to Not-a-Number loss values (therefore not shown in the graph), while our LGL2O correctly detects the tendency to L2O to diverge and switches to the fallback optimizer. As a results, it behaves almost identically to the SGDnm baseline, as desired.  

\begin{figure}[htb]
    \centering
    \includegraphics[width=\columnwidth]{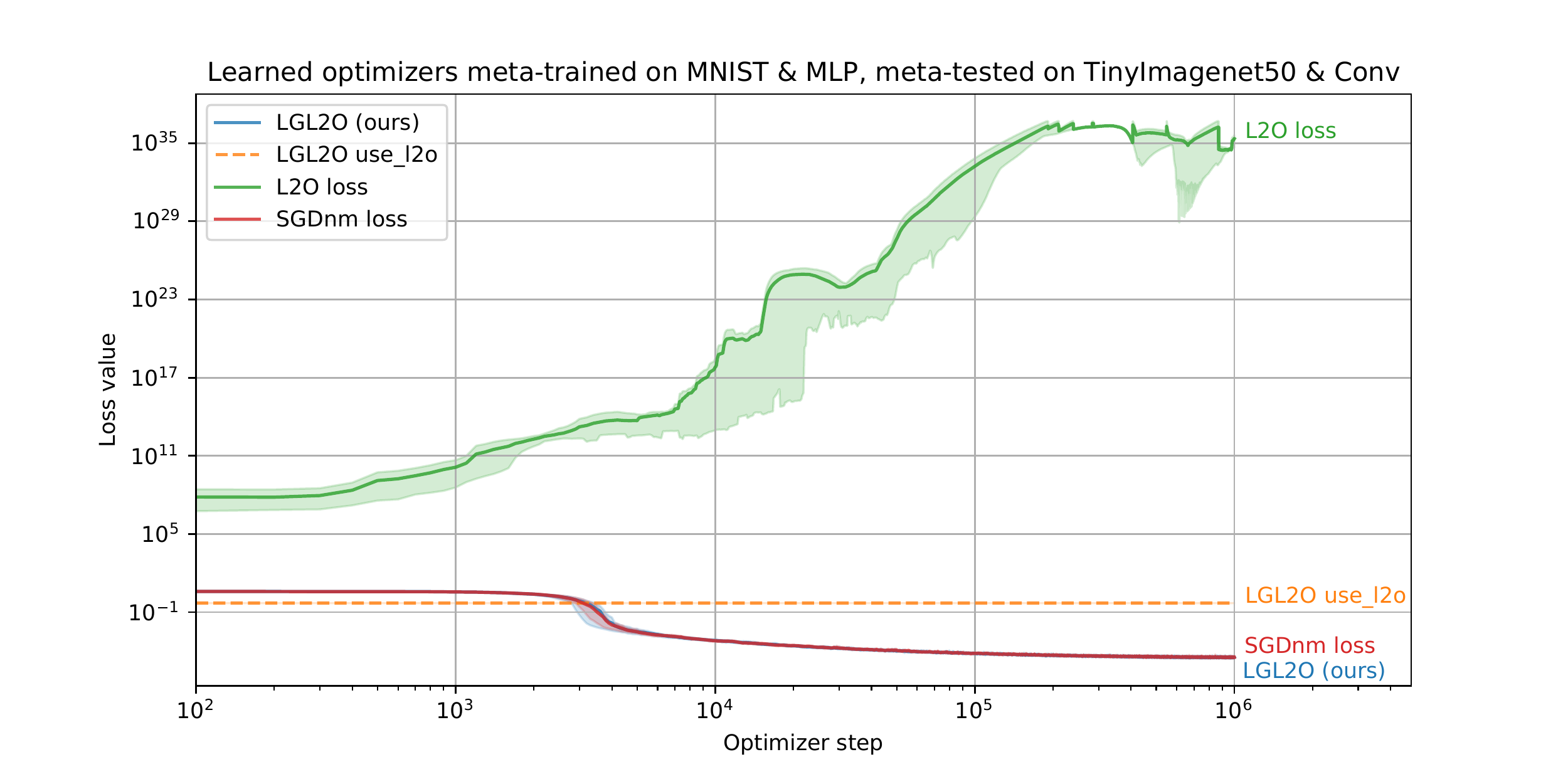}
    \caption{\textbf{Out-of-distribution dataset and out-of-distribution optimizee.} Here, the optimizee (network) has different nature (MLP vs ConvNet) but it has also different size than in the previous experiment. The GL2O algorithm diverges quickly to NaN loss values (ommitted in the graph), L2O diverges to high loss values, while ours, LGL2O, discovers the divergent tendency of the learned L2O and switches immediately to the fallback optimizer. Therefore the L2L2O behaves almost identically to the SGDnm baseline.}
    \label{fig:tinyimagenet_conv}
\end{figure}

\subsection{Faster convergence - Adam as the fallback optimizer of LGL2O}
\label{sec:adam_guard}

\begin{figure}[htb]
    \centering
    \includegraphics[width=\columnwidth]{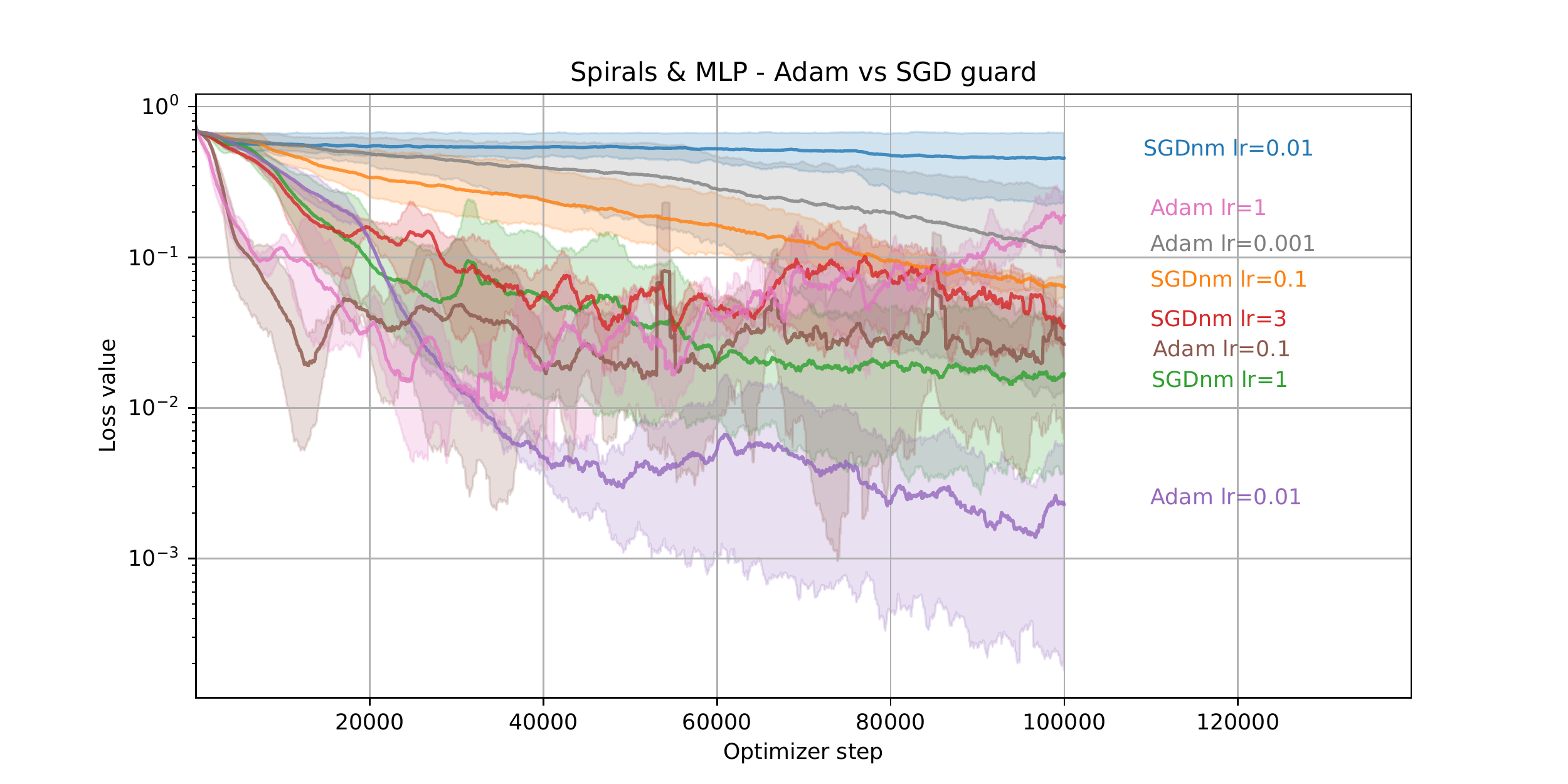}
    \caption{\textbf{LGL2O with a variety of different fallback optimizer.} In this experiment with an out-of-distribution dataset and in-distribution optimizee, we see that we can improve the practical performance of LGL2O by using Adam as a fallback optimizer instead of SGD. The legend designate which fallback optimizer was used in LGL2O as well as its learning rate (lr). \textbf{SGDnm: SGD without moment}. \textbf{SGD: SGD with moment}. \textbf{Adam: Adam}}
    \label{fig:adam_guard}
\end{figure}

In many results shown above, the Adam baseline performed better than LGL2O in the long run (though for sample-efficiency in the beginning LGL2O outperformed Adam in cases where L2O generalized). This is caused by the fact that LGL2O was using SGD (without momentum) as a fallback optimizer. This experiment shows that by using the Adam optimizer as a fallback optimizer, it is possible to achieve considerably better results than with SGD (without momentum), see Figure \ref{fig:adam_guard}. The graph shows convergence of LGL2O optimizing an MLP optimizee on the Spirals dataset. 

This paper focused on using an SGD fallback optimizer, the fallback optimizer needs to itself have a convergence guarantee in order to transmit that property to the LGL2O.

\section{Conclusion}
\label{sec:conclusion}

Most types of learned optimizers have common problems, they either fail to generalize outside of the training distribution and/or do not have good asymptotic properties (e.g. they are unable to overfit to a small dataset or get stuck on a suboptimal solution).

It has been shown that these drawbacks can be addressed using an explicit mechanism (i.e. guarding) that combines the good initial behaviour of learned algorithms with the desired asymptotic properties of analytic algorithms (like SGD). 

This work builds on top of GL2O, which hybridizes learned and hand-crafted learning algorithms and combines best of both worlds. The LGL2O algorithm proposed in this paper is conceptually simpler (e.g. the guarding mechanism has fewer hyperparameters), and computationally less expensive, while maintaining convergence guarantees. We also show that it performs better in practice.

It was shown that LGL2O behaves as ideally desired, it converges quickly using the learned L2O initially and then converges steadily in later stages by relying on the hand-crafted guarding algorithm. At all times and in all cases, LGL2O performed as well or better than it's constituent parts (the learned optimizer, and the fallback optimizer SGD without momentum). And at times it was considerably better than both. The same cannot be said of GL2O which while also provably convergent asymptotically, suffered significant performance loss compared to L2O at the beginning of training in the cases where the L2O was in-distribution or generalized.

In addition to having a simpler decision rule than the GL2O algorithm, LGL2O seems to work more robustly, especially in the ConvNet experiments.

Due to its simplicity, the algorithm did not need much explicit hyperparameter tuning. 

The contributions of the paper are the following:
We propose a new type of guard, LGL2O which is conceptually simpler than the exist class of guards (GL2O), has smaller time complexity, has fewer hyperparameters, and converges better in practice. We prove convergence guarantees for LGL2O. And finally we show that in practice it performs better than GL2O and vanilla L2O. In particular, we show LGL2O can allow the use of learned optimizers without divergence for millions of optimization steps which up until now could never exceed thousands of steps.

In general, the topic of hybridization of the learned algorithm was shown on the case of simple L2O, but the mechanisms shown here are completely general and it should be beneficial to apply them to other, more complex, meta-learning architectures. Further investigation in these directions is left for future work. 

\section*{Acknowledgements}
We would like to thank Martin Poliak,  Nicholas Guttenberg, and David Castillo for their very helpful comments and discussions. Last, but not least, we would like to thank to Marek Rosa, who enabled this research to happen in the first place. 


\clearpage 
\clearpage 
\bibliography{references}  

\begin{thebibliography}{29}
\providecommand{\natexlab}[1]{#1}
\providecommand{\url}[1]{\texttt{#1}}
\expandafter\ifx\csname urlstyle\endcsname\relax
  \providecommand{\doi}[1]{doi: #1}\else
  \providecommand{\doi}{doi: \begingroup \urlstyle{rm}\Url}\fi

\bibitem[Alet* et~al.(2020)Alet*, Schneider*, Lozano-Perez, and
  Kaelbling]{metalearncuriosity}
Alet*, F., Schneider*, M.~F., Lozano-Perez, T., and Kaelbling, L.~P.
\newblock Meta-learning curiosity algorithms.
\newblock In \emph{International Conference on Learning Representations}, 2020.
\newblock URL \url{https://openreview.net/forum?id=BygdyxHFDS}.

\bibitem[Andrychowicz et~al.(2016)Andrychowicz, Denil, Colmenarejo, Hoffman,
  Pfau, Schaul, Shillingford, and de~Freitas]{andrychowicz_learning_2016}
Andrychowicz, M., Denil, M., Colmenarejo, S.~G., Hoffman, M.~W., Pfau, D.,
  Schaul, T., Shillingford, B., and de~Freitas, N.
\newblock Learning to {Learn} by {Gradient} {Descent} by {Gradient} {Descent}.
\newblock In \emph{Proceedings of the 30th {International} {Conference} on
  {Neural} {Information} {Processing} {Systems}}, {NIPS}'16, pp.\  3988--3996,
  Red Hook, NY, USA, 2016. Curran Associates Inc.
\newblock ISBN 978-1-5108-3881-9.
\newblock event-place: Barcelona, Spain.

\bibitem[Chen et~al.(2021)Chen, Chen, Chen, Heaton, Liu, Wang, and
  Yin]{Chen2021}
Chen, T., Chen, X., Chen, W., Heaton, H., Liu, J., Wang, Z., and Yin, W.
\newblock {Learning to Optimize: A Primer and A Benchmark}.
\newblock mar 2021.
\newblock URL \url{http://arxiv.org/abs/2103.12828}.

\bibitem[Co-Reyes et~al.(2021)Co-Reyes, Miao, Peng, Real, Le, Levine, Lee, and
  Faust]{evolveRL}
Co-Reyes, J.~D., Miao, Y., Peng, D., Real, E., Le, Q.~V., Levine, S., Lee, H.,
  and Faust, A.
\newblock Evolving reinforcement learning algorithms.
\newblock In \emph{International Conference on Learning Representations}, 2021.
\newblock URL \url{https://openreview.net/forum?id=0XXpJ4OtjW}.

\bibitem[Gower(2019)]{gowerproofs}
Gower, R.~M.
\newblock Convergence theorems for gradient descent, 2019.
\newblock URL
  \url{https://gowerrobert.github.io/pdf/M2_statistique_optimisation/grad_conv.pdf}.

\bibitem[Heaton et~al.(2020)Heaton, Chen, Wang, and
  Yin]{heaton_safeguarded_2020}
Heaton, H., Chen, X., Wang, Z., and Yin, W.
\newblock Safeguarded {Learned} {Convex} {Optimization}.
\newblock \emph{arXiv:2003.01880}, 2020.
\newblock URL \url{http://arxiv.org/abs/2003.01880}.

\bibitem[Hochreiter \& {Jurgen Schmidhuber}(1997)Hochreiter and {Jurgen
  Schmidhuber}]{lstm}
Hochreiter, S. and {Jurgen Schmidhuber}.
\newblock {Long Short-Term Memory}.
\newblock \emph{Neural Computation}, 9\penalty0 (8):\penalty0 1735--1780, 1997.

\bibitem[Hochreiter et~al.(2001)Hochreiter, Younger, and
  Conwell]{hochreiter_learning_2001}
Hochreiter, S., Younger, A.~S., and Conwell, P.~R.
\newblock Learning to {Learn} {Using} {Gradient} {Descent}.
\newblock In \emph{Artificial {Neural} {Networks} — {ICANN} 2001}, pp.\
  87--94, 2001.

\bibitem[Hospedales et~al.(2021)Hospedales, Antoniou, Micaelli, and
  Storkey]{Hospedales2021}
Hospedales, T.~M., Antoniou, A., Micaelli, P., and Storkey, A.~J.
\newblock Meta-learning in neural networks: A survey.
\newblock \emph{IEEE Transactions on Pattern Analysis \& Machine Intelligence},
  2021.
\newblock ISSN 1939-3539.
\newblock \doi{10.1109/TPAMI.2021.3079209}.

\bibitem[Kirsch \& Schmidhuber(2020)Kirsch and Schmidhuber]{kirsch2020vsml}
Kirsch, L. and Schmidhuber, J.
\newblock Meta learning backpropagation and improving it.
\newblock \emph{Meta Learning Workshop at Advances in Neural Information
  Processing Systems}, 2020.

\bibitem[Kirsch et~al.(2020)Kirsch, van Steenkiste, and
  Schmidhuber]{Kirsch2020Improving}
Kirsch, L., van Steenkiste, S., and Schmidhuber, J.
\newblock Improving generalization in meta reinforcement learning using learned
  objectives.
\newblock In \emph{International Conference on Learning Representations}, 2020.
\newblock URL \url{https://openreview.net/forum?id=S1evHerYPr}.

\bibitem[Krizhevsky(2009)]{cifar10}
Krizhevsky, A.
\newblock {Learning multiple layers of features from tiny images}.
\newblock Technical report, 2009.

\bibitem[LeCun \& Cortes(2010)LeCun and Cortes]{mnist}
LeCun, Y. and Cortes, C.
\newblock {{MNIST} handwritten digit database}.
\newblock 2010.
\newblock URL \url{http://yann.lecun.com/exdb/mnist/}.

\bibitem[Lv et~al.(2017)Lv, Jiang, and Li]{Lv2017}
Lv, K., Jiang, S., and Li, J.
\newblock {Learning Gradient Descent: Better Generalization and Longer
  Horizons}.
\newblock mar 2017.
\newblock URL \url{http://arxiv.org/abs/1703.03633}.

\bibitem[Maheswaranathan et~al.(2020)Maheswaranathan, Sussillo, Metz, Sun, and
  Sohl-Dickstein]{maheswaranathan_reverse_2020}
Maheswaranathan, N., Sussillo, D., Metz, L., Sun, R., and Sohl-Dickstein, J.
\newblock Reverse engineering learned optimizers reveals known and novel
  mechanisms.
\newblock \emph{arXiv:2011.02159}, November 2020.
\newblock URL \url{http://arxiv.org/abs/2011.02159}.

\bibitem[Metz et~al.(2020)Metz, Maheswaranathan, Freeman, Poole, and
  Sohl-Dickstein]{Metz2020}
Metz, L., Maheswaranathan, N., Freeman, C.~D., Poole, B., and Sohl-Dickstein,
  J.
\newblock {Tasks, stability, architecture, and compute: Training more effective
  learned optimizers, and using them to train themselves}.
\newblock 2020.
\newblock URL \url{http://arxiv.org/abs/2009.11243}.

\bibitem[Metz et~al.(2021)Metz, Maheswaranathan, Freeman, Poole, and
  Sohl-Dickstein]{metz_overcoming_2021}
Metz, L., Maheswaranathan, N., Freeman, C.~D., Poole, B., and Sohl-Dickstein,
  J.
\newblock Overcoming barriers to the training of effective learned optimizers.
\newblock 2021.
\newblock URL \url{https://openreview.net/forum?id=MCe-j2-mVnA}.

\bibitem[Pedregosa et~al.(2011)Pedregosa, Varoquaux, Gramfort, Michel, Thirion,
  Grisel, Blondel, Prettenhofer, Weiss, Dubourg, Vanderplas, Passos,
  Cournapeau, Brucher, Perrot, and Duchesnay]{scikit-learn}
Pedregosa, F., Varoquaux, G., Gramfort, A., Michel, V., Thirion, B., Grisel,
  O., Blondel, M., Prettenhofer, P., Weiss, R., Dubourg, V., Vanderplas, J.,
  Passos, A., Cournapeau, D., Brucher, M., Perrot, M., and Duchesnay, E.
\newblock {Scikit-learn: Machine Learning in {P}ython}.
\newblock \emph{Journal of Machine Learning Research}, 12:\penalty0 2825--2830,
  2011.

\bibitem[Prazeres(2020)]{marianablog}
Prazeres, M.~O.
\newblock Stochastic gradient descent: An intuitive proof, 2020.
\newblock URL
  \url{https://medium.com/oberman-lab/proof-for-stochastic-gradient-descent-335bdc8693d0}.

\bibitem[Prokhorov et~al.(2002)Prokhorov, Feldkarnp, and
  Tyukin]{prokhorov_adaptive_2002}
Prokhorov, D., Feldkarnp, L., and Tyukin, I.
\newblock Adaptive behavior with fixed weights in {RNN}: an overview.
\newblock In \emph{Proceedings of the 2002 {International} {Joint} {Conference}
  on {Neural} {Networks}. {IJCNN}'02}, volume~3, pp.\  2018--2022, 2002.

\bibitem[Rosa et~al.(2019)Rosa, Afanasjeva, Andersson, Davidson, Guttenberg,
  Hlubuček, Poliak, Vítků, and Feyereisl]{rosa_badger_2019}
Rosa, M., Afanasjeva, O., Andersson, S., Davidson, J., Guttenberg, N.,
  Hlubuček, P., Poliak, M., Vítků, J., and Feyereisl, J.
\newblock {BADGER}: {Learning} to ({Learn} [{Learning} {Algorithms}] through
  {Multi}-{Agent} {Communication}).
\newblock 2019.
\newblock URL \url{https://arxiv.org/abs/1912.01513}.

\bibitem[Russakovsky et~al.(2015)Russakovsky, Deng, Su, Krause, Satheesh, Ma,
  Huang, Karpathy, Khosla, Bernstein, Berg, and Fei-Fei]{imagenet}
Russakovsky, O., Deng, J., Su, H., Krause, J., Satheesh, S., Ma, S., Huang, Z.,
  Karpathy, A., Khosla, A., Bernstein, M., Berg, A.~C., and Fei-Fei, L.
\newblock {ImageNet Large Scale Visual Recognition Challenge}.
\newblock \emph{International Journal of Computer Vision (IJCV)}, 115\penalty0
  (3):\penalty0 211--252, 2015.
\newblock \doi{10.1007/s11263-015-0816-y}.

\bibitem[Sandler et~al.(2021)Sandler, Vladymyrov, Zhmoginov, Miller, Madams,
  Jackson, and Arcas]{pmlr-v139-sandler21a}
Sandler, M., Vladymyrov, M., Zhmoginov, A., Miller, N., Madams, T., Jackson,
  A., and Arcas, B. A.~Y.
\newblock Meta-learning bidirectional update rules.
\newblock In \emph{Proceedings of the 38th International Conference on Machine
  Learning}, pp.\  9288--9300, 2021.

\bibitem[Sutton(2019)]{bitterlesson}
Sutton, R.
\newblock The bitter lesson, 2019.
\newblock URL \url{http://incompleteideas.net/IncIdeas/BitterLesson.html}.

\bibitem[Tan \& Le(2021)Tan and Le]{efficientnetv2}
Tan, M. and Le, Q.
\newblock Efficientnetv2: Smaller models and faster training.
\newblock In Meila, M. and Zhang, T. (eds.), \emph{Proceedings of the 38th
  International Conference on Machine Learning}, volume 139 of
  \emph{Proceedings of Machine Learning Research}, pp.\  10096--10106. PMLR,
  18--24 Jul 2021.
\newblock URL \url{https://proceedings.mlr.press/v139/tan21a.html}.

\bibitem[Wu et~al.(2018)Wu, Ren, Liao, and Grosse]{wu_understanding_2018}
Wu, Y., Ren, M., Liao, R., and Grosse, R.
\newblock Understanding {Short}-{Horizon} {Bias} in {Stochastic}
  {Meta}-{Optimization}.
\newblock In \emph{International {Conference} on {Learning} {Representations}},
  2018.
\newblock URL \url{https://openreview.net/forum?id=H1MczcgR-}.

\bibitem[Xiao et~al.(2017)Xiao, Rasul, and Vollgraf]{fashion}
Xiao, H., Rasul, K., and Vollgraf, R.
\newblock {Fashion-MNIST: a Novel Image Dataset for Benchmarking Machine
  Learning Algorithms}.
\newblock aug 2017.
\newblock URL \url{http://arxiv.org/abs/1708.07747}.

\bibitem[Xu et~al.(2019)Xu, Dai, Kemp, and Metz]{MetzAdaptiveLR}
Xu, Z., Dai, A.~M., Kemp, J., and Metz, L.
\newblock Learning an adaptive learning rate schedule.
\newblock \emph{CoRR}, abs/1909.09712, 2019.
\newblock URL \url{http://arxiv.org/abs/1909.09712}.

\bibitem[Zhou(2018)]{zhou2018fenchel}
Zhou, X.
\newblock On the fenchel duality between strong convexity and lipschitz
  continuous gradient.
\newblock \emph{arXiv}, 2018.
\newblock URL \url{http://arxiv.org/abs/1803.06573}.
\newblock arXiv: 1803.06573.

\end{thebibliography}
\bibliographystyle{icml2022}


\clearpage
\appendix

\newpage
\section{Appendix}
This appendix contains the proofs of convergence guarantees, the hyperparameters used and how they were obtained, how randomness was used in the algorithms, and which hardware and software were used as well as addition experiments of the LGL2O Guard with a different learned optimizer to show the generality of our guard.

\textbf{Table of Contents:}\newline
\ref{sec:additionalfigs} Additional Figures \newline
\ref{sec:proof} Proof of Convergence Guarantee\newline
\ref{sec:different} Experiments with Different Learned Optimizer\newline
\ref{sec:hyperparams} Hyperparameters\newline
\ref{sec:randomness} Randomness\newline
\ref{sec:hardandsofware} Hardware and Software

\section{Additional Figures}
\label{sec:additionalfigs}

\begin{figure}[htb]
    \centering
    \includegraphics[width=\columnwidth]{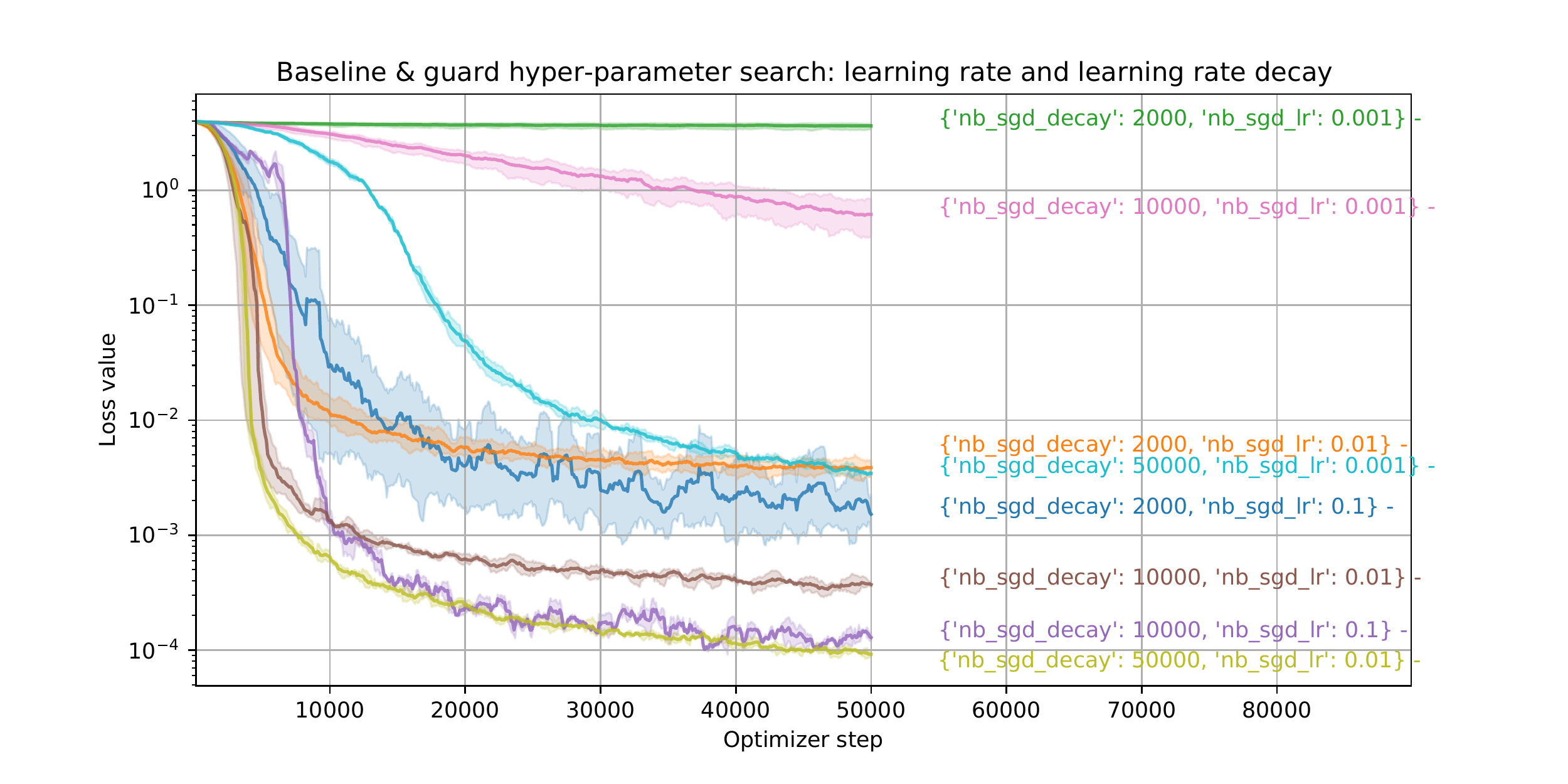}
    \caption{Since the ConvNet optimizee has different topology and size than other methods, additional grid-search for the baseline and the guard hyper-parameters (SGDnm) was made. Results are concluded over 5 runs. The resulting hyperparameters used for the TinyImagenet experiment are: $learning\_rate=0.01$ and  $learning\_rate\_decay=50000$.}
    \label{fig:tinyimagenet_conv_grid}
\end{figure}

\section{Proof of Convergence Guarantee}
\label{sec:proof}

Many of the proofs in this section have been strongly inspired by \citep{marianablog} and \citep{gowerproofs}

\begin{prop}
\label{prop:lgl2olsmooth}
Let $f$ be a continuous loss function which is $L$-smooth and let $w^*$ be its global minimum. Let $E(w) := f(w) - f(w^*)$. Let $w_{i+1}$ and $w_{i}$ be two subsequent points in a sequence generated by the Loss-Guarded L2O algorithm using (deterministic) gradient descent with a step size of $\alpha/L$ for the guarding mechanism. Then:
\begin{align}
\label{lsmootheq1}
    E(w_{i+1}) - E(w_i) \leq -\frac{\alpha}{L}(1 -\frac{\alpha}{2}) \lVert \nabla f(w_i)\rVert_2^2 
\end{align}
and 
\begin{align}
\label{lsmootheq2}
    E(w_i) \geq \frac{\alpha}{L}(1 -\frac{\alpha}{2}) \lVert \nabla f(w_i)\rVert_2^2 
\end{align}
\end{prop}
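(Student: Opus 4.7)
The plan is to exploit the key property of the LGL2O update rule: since the algorithm picks whichever of the L2O proposal $y_i$ or the fallback gradient descent proposal $z_i = w_i - (\alpha/L)\nabla f(w_i)$ has the lower loss, we have unconditionally that $f(w_{i+1}) \leq f(z_i)$. So the entire proof reduces to analyzing a pure gradient descent step under $L$-smoothness, and the L2O contribution drops out of the worst-case bound.

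For the first inequality, I would invoke the standard $L$-smoothness descent inequality $f(y) \leq f(x) + \nabla f(x)^\top (y - x) + \tfrac{L}{2}\lVert y - x\rVert_2^2$ with $x = w_i$ and $y = z_i = w_i - (\alpha/L)\nabla f(w_i)$. Expanding the inner product and the squared norm gives $f(z_i) \leq f(w_i) - \tfrac{\alpha}{L}\lVert \nabla f(w_i)\rVert_2^2 + \tfrac{\alpha^2}{2L}\lVert \nabla f(w_i)\rVert_2^2$, which factors as $f(w_i) - \tfrac{\alpha}{L}\bigl(1 - \tfrac{\alpha}{2}\bigr)\lVert \nabla f(w_i)\rVert_2^2$. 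Chaining with $f(w_{i+1}) \leq f(z_i)$ and subtracting $f(w^*)$ from both sides turns $f$-differences into $E$-differences, giving inequality \eqref{lsmootheq1}.

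For the second inequality, I would use the fact that $w^*$ is a global minimum, so $f(w^*) \leq f(z_i)$ for the particular $z_i$ constructed above. Combining this with the same upper bound $f(z_i) \leq f(w_i) - \tfrac{\alpha}{L}\bigl(1 - \tfrac{\alpha}{2}\bigr)\lVert \nabla f(w_i)\rVert_2^2$ yields $f(w^*) \leq f(w_i) - \tfrac{\alpha}{L}\bigl(1 - \tfrac{\alpha}{2}\bigr)\lVert \nabla f(w_i)\rVert_2^2$, and rearranging produces exactly \eqref{lsmootheq2}.

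I do not expect any real obstacle here: the only conceptual move is observing that the guard's ``min'' rule makes the $y_i$ branch irrelevant for one-sided upper bounds on $f(w_{i+1})$, after which the argument is the textbook descent-lemma calculation for gradient descent on an $L$-smooth function. The mildest care needed is just to make sure the constraint on $\alpha$ (implicit in the later theorem, $0 < \alpha < 2$ coming from $\lambda < 2/L$) is not used here — indeed the two inequalities as stated hold for any $\alpha > 0$, with the right-hand side simply becoming vacuous or negative if $\alpha \geq 2$.
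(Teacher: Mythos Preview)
Your proposal is correct and follows essentially the same approach as the paper: apply the $L$-smoothness descent inequality to the gradient-descent proposal $z_i$, then use the guard's selection rule to bound $f(w_{i+1})$ by $f(z_i)$. The only cosmetic differences are that the paper does an explicit case split on which branch was chosen (whereas you absorb both cases into the single observation $f(w_{i+1}) \leq f(z_i)$), and the paper derives \eqref{lsmootheq2} as a corollary of \eqref{lsmootheq1} via $E(w_{i+1}) \geq 0$ rather than via $f(w^*) \leq f(z_i)$ directly.
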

\begin{proof}
If $w_{i+1}$ is chosen via the guarding mechanism then we have that $w_{i+1} = w_i - \frac{\alpha}{L}\nabla f(w_i)$. Furthermore by virtue of $f$ being $L$-smooth we have that 
\begin{align}
    \label{lsmoothprop}
    f(y) \leq f(x) + \langle \nabla f(x), y - x\rangle + \frac{L}{2}\lVert y - x\rVert_2^2 .
\end{align}
Thus assuming that $w_{i+1}$ is chosen via the guarding mechanism we have that
\begin{footnotesize}
\begin{alignat*}{3}
    E(w_{i+1}) - E(w_i) & = && (f(w_{i+1}) -f(w^*)) -  (f(w_{i}) -f(w^*)) && \\
    & = && f(w_{i+1}) -  f(w_{i}) && \\
    & = ^\dagger && f\left( w_i - \frac{\alpha}{L}\nabla f(w_i)\right) -  f(w_{i}) &&\\
    & \leq^\ddagger&&  \left[ f(w_i) - \frac{\alpha}{L}\langle \nabla f(w_i), \nabla f(w_i)\rangle \right. + \\ 
    &  && + \left. \frac{L}{2}\frac{\alpha^2}{L^2}\lVert \nabla f(w_i)\rVert_2^2\right] - f(w_i)  \\
    & = && -\frac{\alpha}{L}(1 -\frac{\alpha}{2}) \lVert \nabla f(w_i)\rVert_2^2 . &&
\end{alignat*}
\end{footnotesize}

Where for $\dagger$ we use the fact $w_{i+1}$ is obtained through the guarding mechanism, and for $\ddagger$ we applied inequality (\ref{lsmoothprop}) with $y=w_i - \frac{\alpha}{L}\nabla f(w_i)$ and $x = w_i$. This proves inequality (\ref{lsmootheq1}) if $w_{i+1}$ was chosen using the guarding.

If $w_{i+1}$ was chosen using L2O, then we know by criterion (\ref{lgl2ocriterion}) that $$f(w_{i+1}) < f\left( w_i - \frac{\alpha}{L}\nabla f(w_i)\right) $$ and so the above derivation follows through except that at $\dagger$ we have a (strict) inequality instead of an equality.  

Inequality (\ref{lsmootheq2}) is easily obtained from (\ref{lsmootheq1}) by noticing that $\forall w, E(w)\geq 0$ because $w^*$ is the global minimum of $f$, and so $- E(w_i) \leq E(w_{i+1}) - E(w_i)$.
\end{proof}

\begin{thm}
\label{thm:lyap}
Let $f$ be a continuous loss function which is strictly convex and $L$-smooth and let $w^*$ be it's global minimum. Then $E(w) := f(w) - f(w^*)$ is a Lyapunov function for sequences of points generated by the Loss-Guarded L2O algorithm using (deterministic) gradient descent for the guarding mechanism with a step size of $\frac{a}{L}$ with $\alpha \in ]0, 2[ $. 
\end{thm}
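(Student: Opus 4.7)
The plan is to verify the three standard properties of a Lyapunov function for the candidate $E(w) = f(w) - f(w^*)$ on sequences $\{w_i\}$ produced by LGL2O with step size $\alpha/L$.

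First I would check positivity: since $w^*$ is the global minimum of $f$, we have $E(w) \geq 0$ for all $w$, with equality iff $f(w) = f(w^*)$. By strict convexity of $f$, the global minimizer is unique, so $E(w) = 0$ iff $w = w^*$. This gives $E$ the required positive-definite form.

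Second, I would establish the non-increase property by invoking Proposition~\ref{prop:lgl2olsmooth} directly. Inequality (\ref{lsmootheq1}) gives
\begin{equation*}
    E(w_{i+1}) - E(w_i) \leq -\tfrac{\alpha}{L}\bigl(1 - \tfrac{\alpha}{2}\bigr) \lVert \nabla f(w_i)\rVert_2^2 .
\end{equation*}
For $\alpha \in (0, 2)$, the prefactor $\frac{\alpha}{L}(1 - \frac{\alpha}{2})$ is strictly positive, and $\lVert \nabla f(w_i)\rVert_2^2 \geq 0$, so $E(w_{i+1}) \leq E(w_i)$. This is the monotone non-increase required of a Lyapunov function.

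Third, I would argue strictness away from $w^*$: since $f$ is differentiable and strictly convex, $\nabla f(w_i) = 0$ iff $w_i$ is the unique global minimum $w^*$. Hence whenever $w_i \neq w^*$, the right-hand side of the above inequality is strictly negative, giving $E(w_{i+1}) < E(w_i)$. Combining these three properties yields the claim. I do not anticipate a hard step here: the heavy lifting (handling both branches of the LGL2O switch via the criterion (\ref{lgl2ocriterion})) was already carried out in Proposition~\ref{prop:lgl2olsmooth}, so the proof reduces to checking the sign of $\frac{\alpha}{L}(1 - \frac{\alpha}{2})$ for $\alpha \in (0,2)$ and invoking strict convexity to pin down the equality case.
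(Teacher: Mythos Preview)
Your proposal is correct and follows essentially the same route as the paper: verify positive-definiteness of $E$ from strict convexity (uniqueness of the minimizer) and then invoke Proposition~\ref{prop:lgl2olsmooth}, inequality~(\ref{lsmootheq1}), together with the sign of $\tfrac{\alpha}{L}(1-\tfrac{\alpha}{2})$ for $\alpha\in(0,2)$, to get the monotone non-increase. The paper's version additionally lists continuity of $E$ (immediate from continuity of $f$) as one of the Lyapunov conditions, while you additionally argue strict decrease away from $w^*$ via $\nabla f(w_i)\neq 0$; neither difference changes the substance of the argument.
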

\begin{proof}
To show that $E$ is a Lyapunov function we need to show that:
\begin{enumerate}
    \item $E$ is continuous.
    \item $E(w) = 0$ if and only if $w=w^*$.
    \item $E(w) > 0$ if and only if $w\neq w^*$.
    \item $E(w_{i+1}) \leq E(w_i), \ \forall i \in \mathbb{N}$ .
\end{enumerate}

1. is automatic by continuity of $f$. 2 and 3 and also immediate by virtue of $w^*$ being the unique global minimum of $f$. And finally 4 is give from inequality (\ref{lsmootheq1}) of Proposition \ref{prop:lgl2olsmooth}. 
\end{proof}

\begin{lemma}
\label{lemma:SC-PLC}
Strong convexity implies the Polyak-Łojasiewicz Condition \citep{zhou2018fenchel}:
\begin{align}
    \label{PLC}
    E(w):= f(w) - f(w^*) \leq \frac{1}{2\mu} \lVert \nabla f(w)\rVert_2^2 .
\end{align}
\end{lemma}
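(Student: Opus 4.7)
The plan is to start from the defining inequality of $\mu$-strong convexity and minimize its right-hand side over the free argument. Recall that $f$ being $\mu$-strongly convex means that for all $w,y$,
\begin{equation*}
f(y) \geq f(w) + \langle \nabla f(w), y - w \rangle + \frac{\mu}{2}\lVert y - w \rVert_2^2 .
\end{equation*}
The right-hand side is a convex quadratic in $y$, so I can compute its global minimum in closed form by setting its gradient to zero: $\nabla f(w) + \mu(y-w) = 0$, giving the minimizer $y^\star = w - \frac{1}{\mu}\nabla f(w)$.

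Next I would substitute $y^\star$ back into the quadratic lower bound and simplify. The linear term contributes $-\frac{1}{\mu}\lVert \nabla f(w)\rVert_2^2$ while the quadratic term contributes $+\frac{1}{2\mu}\lVert \nabla f(w)\rVert_2^2$, and these combine to $-\frac{1}{2\mu}\lVert\nabla f(w)\rVert_2^2$. Thus, because the inequality holds for all $y$, it holds in particular at $y^\star$, yielding the universal lower bound
\begin{equation*}
f(y) \geq f(w) - \frac{1}{2\mu}\lVert\nabla f(w)\rVert_2^2 \qquad \forall y .
\end{equation*}

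Finally I would specialize this bound to $y = w^\star$, the global minimizer of $f$, giving $f(w^\star) \geq f(w) - \frac{1}{2\mu}\lVert\nabla f(w)\rVert_2^2$. Rearranging produces exactly the PL inequality $E(w) = f(w) - f(w^\star) \leq \frac{1}{2\mu}\lVert\nabla f(w)\rVert_2^2$, which is what was to be shown. There is no real obstacle here — the only ``step with content'' is recognizing that the tightest consequence of strong convexity is obtained by minimizing its lower-bound quadratic in the free variable; everything else is algebra.
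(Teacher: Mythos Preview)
Your proof is correct and follows essentially the same approach as the paper: both minimize the right-hand side of the strong-convexity inequality and then specialize to the global minimizer $w^*$. The only cosmetic difference is that the paper first applies Cauchy--Schwarz to reduce the inner product to $-\lVert\nabla f(w)\rVert_2\lVert y-w\rVert_2$ and then minimizes over the scalar $\lVert y-w\rVert_2$, whereas you minimize the quadratic in $y$ directly; both routes yield the same bound $f(w^*)\geq f(w)-\tfrac{1}{2\mu}\lVert\nabla f(w)\rVert_2^2$.
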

\begin{proof}
From strong convexity we have that 
$$f(y) \geq f(x) + \langle \nabla f(x), y-x\rangle + \frac{\mu}{2}\lVert y - x\rVert_2^2 .$$
Using Hölder's inequality we have that
$$f(y) \geq f(x) - \lVert \nabla f(x) \rVert_2 \lVert y-x\rVert_2 + \frac{\mu}{2}\lVert y - x\rVert_2^2 .$$
Let $y_L$ be the value of $y$ which minimizes the left hand side of the inequality and $y_R$ be the value that minimizes the right hand side of the inequality. Then we have
\begin{align}
    f(y_L) & \geq f(x) - \lVert \nabla f(x) \rVert_2^2 \ \lVert y_L-x\rVert_2^2 + \frac{\mu}{2}\lVert y_L - x\rVert_2^2 \nonumber \\
    \label{eq:muconv2}
    & \geq f(x) - \lVert \nabla f(x) \rVert_2 \ \lVert y_R-x\rVert_2 + \frac{\mu}{2}\lVert y_R - x\rVert_2^2 .
\end{align}
Minimizing (\ref{eq:muconv2}) with respect to $y_R$ we find that $y_R$ satisfies:
\begin{align}
\label{eq:plc_prereq}
    \lVert y_R-x\rVert_2 = \frac{1}{\mu}\lVert \nabla f(x) \rVert_2 .
\end{align}

Now using the fact that by definition $y_L=x^*$, the minimum of $f$, and plugging (\ref{eq:plc_prereq}) in (\ref{eq:muconv2}) we get
\begin{align*}
    f(x^*) & \geq f(x) -  \frac{1}{2\mu}\lVert \nabla f(x) \rVert_2^2 ,
\end{align*}
which can be re-arranged to give the desired Polyak-Łojasiewicz Condition.
\end{proof}

\begin{thm}
\label{thm:conv}
Let $f$ be a continuous loss function which is $\mu$-strongly convex and $L$-smooth and let $w^*$ be it's global minimum. Then sequences of points generated by the Loss-Guarded L2O algorithm using (deterministic) gradient descent for the guarding mechanism with a step size of $\frac{a}{L}$ with $\alpha \in ]0, \min(2, 2 \mu L)[ $ converge to $w^*$, i.e. $$\substack{\lim \\ i\rightarrow \infty} w_i = w^*$$. 
\end{thm}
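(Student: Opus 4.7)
The plan is to combine the per-step loss decrease from Proposition \ref{prop:lgl2olsmooth} with the Polyak-Łojasiewicz inequality from Lemma \ref{lemma:SC-PLC} to obtain a geometric contraction of the suboptimality gap $E(w_i) = f(w_i) - f(w^*)$, and then transfer convergence in loss-value back to convergence of the iterates using strong convexity.

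First I would recall that Proposition \ref{prop:lgl2olsmooth} gives, for every iterate of the Loss-Guarded L2O algorithm,
\begin{equation*}
  E(w_{i+1}) \leq E(w_i) - \tfrac{\alpha}{L}\bigl(1 - \tfrac{\alpha}{2}\bigr)\,\lVert \nabla f(w_i)\rVert_2^2,
\end{equation*}
regardless of whether the L2O proposal or the gradient-descent fallback was accepted, which is exactly the point at which the loss-guarding criterion (\ref{lgl2ocriterion}) does its work. Then I would invoke Lemma \ref{lemma:SC-PLC}, which yields $\lVert\nabla f(w_i)\rVert_2^2 \geq 2\mu E(w_i)$, and substitute it into the previous display to obtain a one-step contraction of the form
\begin{equation*}
  E(w_{i+1}) \leq \Bigl(1 - \tfrac{2\mu\alpha}{L}\bigl(1 - \tfrac{\alpha}{2}\bigr)\Bigr) E(w_i) = (1-c)\,E(w_i),
\end{equation*}
with $c := \tfrac{\mu\alpha(2-\alpha)}{L}$.

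Next I would verify that the assumed range $\alpha \in \,]0, \min(2, 2\mu L)[\,$ forces $c \in (0,1)$, so that iterating the bound gives $E(w_i) \leq (1-c)^i E(w_0) \to 0$. Positivity of $c$ is immediate from $\alpha \in (0,2)$; the upper bound $c < 1$ follows from the constraint tied to $2\mu L$ (equivalently, a quadratic in $\alpha$ with the required sign), which I would check by a short one-line algebraic calculation. Finally, I would use that $\mu$-strong convexity implies $\tfrac{\mu}{2}\lVert w - w^*\rVert_2^2 \leq E(w)$, so that $E(w_i) \to 0$ forces $\lVert w_i - w^*\rVert_2 \to 0$, yielding the desired $\lim_{i\to\infty} w_i = w^*$.

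The main obstacle, and really the only non-routine point, is the first step: making sure that Proposition \ref{prop:lgl2olsmooth} continues to apply when $w_{i+1}$ is chosen by the L2O branch rather than the gradient-descent branch. This is where the loss-guarding criterion (\ref{lgl2ocriterion}) is essential, because it guarantees $f(w_{i+1}) \leq f(w_i - \tfrac{\alpha}{L}\nabla f(w_i))$ in that case, so the smoothness bound used to derive (\ref{lsmootheq1}) still pushes through (with an inequality in place of the equality at step $\dagger$ of that proof). Everything after this point is a standard linear-convergence argument for gradient descent on $\mu$-strongly convex, $L$-smooth functions, so no further difficulty is anticipated.
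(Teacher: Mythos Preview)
Your proposal is correct and follows essentially the same route as the paper: combine the per-step descent inequality of Proposition~\ref{prop:lgl2olsmooth} with the Polyak--\L{}ojasiewicz inequality of Lemma~\ref{lemma:SC-PLC} to obtain a geometric contraction of $E(w_i)$, then iterate. Your contraction factor $c=\tfrac{2\mu\alpha}{L}(1-\tfrac{\alpha}{2})$ is in fact the correct one (the paper's displayed factor $\tfrac{\alpha}{2\mu L}(1-\tfrac{\alpha}{2})$ appears to contain an algebraic slip), and you add the explicit final step, omitted in the paper, of passing from $E(w_i)\to 0$ to $w_i\to w^*$ via the strong-convexity lower bound $\tfrac{\mu}{2}\lVert w-w^*\rVert_2^2\le E(w)$.
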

\begin{proof}
Let $E$ be the Lyapunov function of Theorem \ref{thm:lyap}. 
By $\mu$-strong convexity of $f$, we have that $f$ satisfies the Polyak-Łojasiewicz Condition (\ref{PLC}).
Thus from Proposition \ref{prop:lgl2olsmooth}, we have that:
\begin{align*}
E(w_{i+1}) - E(w_i) & \leq -\frac{\alpha}{L}(1 -\frac{\alpha}{2}) \lVert \nabla f(w_i)\rVert_2^2 \nonumber\\
 & \leq -\frac{\alpha}{2 \mu L}(1 -\frac{\alpha}{2}) E(w_i) .
\end{align*}
Where the second inequality follows from the Polyak-Łojasiewicz condition. Thus:
\begin{align}
\label{convineq}
    E(w_{i+1}) \leq  \left[1 -\frac{\alpha}{2 \mu L}(1 -\frac{\alpha}{2})\right] E(w_i) .
\end{align}
And thus for $\alpha \in ]0, \min(2, 2 \mu L)[$, convergence to $w^*$ is guaranteed because (\ref{convineq}) implies that
$$ E(w_{i}) \leq  \left[1 -\frac{\alpha}{2 \mu L}(1 -\frac{\alpha}{2})\right]^i E(w_0) ,$$
which means that $\substack{\lim \\ i\rightarrow \infty} E(w_i) = 0$.
\end{proof}

\begin{definition}
Given a loss function for $f: \R^k \rightarrow \R$ composed of the sum of the loss for each sample $x\in X$:
$$f(w) := \sum_{x \in X} \varphi(w, x) \mu_X(x) , $$
where $\varphi: \R^k \times X \rightarrow \R $ is the sample-dependent loss function and $\mu_X(x)$ is the probability density of sample $x$, we define the mini-batch stochastic gradient of mini-batch size M of $f$ at $w$, $\nabla_{mb}f(w)$, to be the random variable $\nabla_{mb}f(w): X^M \rightarrow \R^k$ with probability distribution
\begin{footnotesize}
\begin{align}
    P\left(\nabla_{mb}f(w) = \frac{1}{\sum_{i=1}^M\mu_X(x_i)} \sum_{i=1}^M \nabla_w \varphi(w, x_i) \mu_X(x_i) \right) = \nonumber \\ 
    = \prod_{i=1}^M \mu_X(x_i) \nonumber .
\end{align}
\end{footnotesize}

\end{definition}
\begin{prop}
\label{prop:sgdlsmooth}
Let $f$ be a continuous loss function which is $L$-smooth and let $w^*$ be it's global minimum. Let $E(w) := f(w) - f(w^*)$. Let $w_{i+1}$ and $w_{i}$ be two subsequent points in a sequence generated by the Loss-Guarded L2O algorithm using mini-batch stochastic gradient descent with a step size of $\alpha/L$ for the guarding mechanism. If, in expectation, the stochastic gradient $\nabla_{mb} f(w)$ is equal to the true gradient, 
$$\Esp(\nabla_{mb} f(w)) = \nabla f(w) , $$ and the variance of the stochastic gradient is bounded, 
$$\operatorname{Var}(\lVert \nabla_{mb} f(w)\lVert_2^2 ) \leq \sigma^2 ,$$ then:
\begin{align}
\label{sgdlsmootheq1}
    E(w_{i+1}) - E(w_i) \leq & -\frac{\alpha}{L}(1 -\frac{\alpha }{2}) \nonumber \\
    & \left( \lVert \nabla f(w_i)\rVert_2^2  - \frac{\alpha\sigma^2}{2(1-\alpha/2)}\right)
\end{align}
and 
\begin{align}
\label{sgdlsmootheq2}
    E(w_i) \geq \frac{\alpha}{L}(1 -\frac{\alpha }{2})\left( \lVert \nabla f(w_i)\rVert_2^2  - \frac{\alpha\sigma^2}{2(1-\alpha/2)}\right)
\end{align}
\end{prop}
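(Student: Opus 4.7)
The plan is to mirror the proof of Proposition \ref{prop:lgl2olsmooth}, replacing deterministic bounds with their expected counterparts. As before, I would split on which branch the guard takes at step $i$. In either case, by criterion (\ref{lgl2ocriterion}) the accepted iterate $w_{i+1}$ satisfies $\Ltask(w_{i+1}) \leq \Ltask(z_i)$, where $z_i := w_i - (\alpha/L)\nabla_{mb}\Ltask(w_i)$ is the SGD proposal (the $\leq$ holds by construction when $w_{i+1}=z_i$, and by the criterion when $w_{i+1}$ is the L2O proposal). Hence it suffices to produce the desired upper bound for $\Esp[\Ltask(z_i) - \Ltask(w_i) \mid w_i]$.

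To bound $\Ltask(z_i)$, I would apply the $L$-smoothness inequality (\ref{lsmoothprop}) with $y=z_i$ and $x=w_i$. The first-order term becomes $-(\alpha/L)\langle \nabla \Ltask(w_i), \nabla_{mb}\Ltask(w_i)\rangle$ and the second-order term is $(\alpha^2/2L)\lVert \nabla_{mb}\Ltask(w_i)\rVert_2^2$. Conditioning on $w_i$ and taking expectation over the mini-batch draw at step $i$: the unbiasedness hypothesis reduces the first-order term to $-(\alpha/L)\lVert \nabla\Ltask(w_i)\rVert_2^2$, while the bias-variance identity $\Esp\lVert \nabla_{mb}\Ltask(w_i)\rVert_2^2 = \lVert \nabla\Ltask(w_i)\rVert_2^2 + \Esp\lVert \nabla_{mb}\Ltask(w_i) - \nabla\Ltask(w_i)\rVert_2^2$, combined with the variance hypothesis read as $\Esp\lVert \nabla_{mb}\Ltask(w_i) - \nabla\Ltask(w_i)\rVert_2^2 \leq \sigma^2$, bounds the second-order term by $(\alpha^2/2L)(\lVert \nabla\Ltask(w_i)\rVert_2^2 + \sigma^2)$. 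Collecting the $\lVert \nabla\Ltask(w_i)\rVert_2^2$ coefficients gives $-(\alpha/L)(1-\alpha/2)\lVert \nabla\Ltask(w_i)\rVert_2^2 + \alpha^2\sigma^2/(2L)$, and pulling out the common factor $-(\alpha/L)(1-\alpha/2)$ yields exactly the right-hand side of (\ref{sgdlsmootheq1}).

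Inequality (\ref{sgdlsmootheq2}) then follows for free by the same device used in Proposition \ref{prop:lgl2olsmooth}: since $w^*$ is the global minimum, $E(w_{i+1})\geq 0$ pointwise, hence $\Esp[E(w_{i+1})\mid w_i] - E(w_i) \geq -E(w_i)$; chaining this with (\ref{sgdlsmootheq1}) and multiplying through by $-1$ delivers (\ref{sgdlsmootheq2}).

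The main obstacle I anticipate is pinning down the meaning of the variance hypothesis. As written, $\operatorname{Var}(\lVert \nabla_{mb}\Ltask(w)\rVert_2^2) \leq \sigma^2$ bounds the variance of the squared norm itself, which is not the quantity that naturally arises from the bias-variance decomposition above. For the factorization to match the stated RHS, one must interpret the hypothesis as the standard mean-squared-error bound $\Esp\lVert \nabla_{mb}\Ltask(w) - \nabla\Ltask(w)\rVert_2^2 \leq \sigma^2$; otherwise one would need to control $\Esp\lVert \nabla_{mb}\Ltask(w)\rVert_2^2$ via a separate argument (for example, an additional uniform bound on the per-sample gradients). A secondary subtlety is that the LHS of (\ref{sgdlsmootheq1}) should really be read as a conditional expectation $\Esp[E(w_{i+1})\mid w_i] - E(w_i)$, since as a pathwise statement the inequality would fail on unlucky mini-batch draws.
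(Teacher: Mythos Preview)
Your proposal is correct and matches the paper's proof essentially step for step: the paper too reduces to bounding $\Esp[f(z_i)-f(w_i)\mid w_i]$ via the guard criterion, applies $L$-smoothness at $(x,y)=(w_i,z_i)$, uses unbiasedness on the linear term and the bound $\Esp[\lVert \nabla_{mb}f(w_i)\rVert_2^2\mid w_i]\le \lVert \nabla f(w_i)\rVert_2^2+\sigma^2$ on the quadratic term, and derives (\ref{sgdlsmootheq2}) from $E(w_{i+1})\ge 0$. Your two caveats are also on target: the paper's displayed computation is indeed in conditional expectation, and its use of the variance bound is exactly the mean-squared-error interpretation $\Esp\lVert \nabla_{mb}f(w)-\nabla f(w)\rVert_2^2\le\sigma^2$ that you flag.
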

\begin{proof}
The proof follows as in Proposition \ref{prop:lgl2olsmooth}, except that we now do it in expectation (over the mini-batch) and that if the guarding mechanism is used, then $w_{i+1} = w_i - \frac{\alpha}{L}\nabla_{mb}f(w_i)$, but as per criterion (\ref{lgl2ocriterion}), whether L2O is used or the  guarding mechanism, we have that $f(w_{i+1}) \leq f(w_i - \frac{\alpha}{L}\nabla_{mb}f(w_i))$. 
Thus assuming that $w_{i+1}$ is chosen via the guarding mechanism we have that
\pagebreak
\begin{strip}  
\begin{align*}
    \Esp\left[E(w_{i+1}) - E(w_i)\middle|w_i\right] &= \big(\Esp\left[f(w_{i+1})\middle|w_i\right] -f(w^*)\big) -  \big(f(w_{i}) -f(w^*)\big) \\
    & = \Esp\left[f(w_{i+1})\middle|w_i\right] -  f(w_{i}) \\
    & \leq \Esp\left[f\left( w_i - \frac{\alpha}{L}\nabla_{mb} f(w_i)\right)\middle|w_i\right] -  f(w_{i}) \\
    & \leq^\dagger \left\{ f(w_i) - \frac{\alpha}{L}\langle \nabla f(w_i), \Esp\left[\nabla_{mb} f(w_i)\middle|w_i\right]\rangle + \frac{L}{2}\frac{\alpha^2}{L^2}\Esp\left[\lVert \nabla_{mb} f(w_i)\rVert_2^2\middle|w_i\right]\right\} - f(w_i) \\
    & \leq^\ddagger \left\{ f(w_i) - \frac{\alpha}{L}\langle \nabla f(w_i), \nabla f(w_i)\rangle + \frac{L}{2}\frac{\alpha^2}{L^2}\left(\lVert \nabla f(w_i)\rVert_2^2 + \sigma^2\right)\right\} - f(w_i)\\
    & = -\frac{\alpha}{L}(1 -\frac{\alpha }{2})\left( \lVert \nabla f(w_i)\rVert_2^2  - \frac{\alpha\sigma^2}{2 (1-\alpha/2)}\right),
\end{align*}
\end{strip}
where for $\dagger$ we applied inequality (\ref{lsmoothprop}) with $y=w_i - \frac{\alpha}{L}\nabla_{mb} f(w_i)$ and $x = w_i$, and for $\ddagger$ we use the fact that $\Esp\left[\nabla_{mb} f(w_{i+1})\middle|w_i\right] = \nabla f(w)$ and that since the variance of $\nabla_{mb} f(w)$ is bounded by $\sigma^2$ we have that $\Esp\left[\lVert \nabla_{mb} f(w_i)\rVert_2^2\middle|w_i\right] \leq \lVert \nabla f(w_i)\rVert_2^2 + \sigma^2$. 

Again inequality (\ref{sgdlsmootheq2}) follows trivially from  (\ref{sgdlsmootheq1}) by noticing that $\forall w, E(w)\geq 0$ because $w^*$ be is the global minimum of $f$, and so $- E(w_i) \leq E(w_{i+1}) - E(w_i)$.
\end{proof}

\begin{thm}
\label{thm:sgdconv}
Let $f$ be a continuous loss function which is $\mu$-strongly convex and $L$-smooth and let $w^*$ be it's global minimum. If, in expectation, the stochastic gradient $\nabla_{mb} f(w)$ is equal to the true gradient, 
$$\Esp(\nabla_{mb} f(w)) = \nabla f(w) , $$ and the variance of the stochastic gradient is bounded, then sequences of points generated by the Loss-Guarded L2O algorithm using stochastic gradient descent for the guarding mechanism with a step size of $\frac{a_i}{L}$ with $\alpha_i \propto \frac{1}{i_0 +i}$ converge to $w^*$, i.e. $$\substack{\lim \\ i\rightarrow \infty} w_i = w^* .$$ 
\end{thm}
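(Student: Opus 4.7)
The plan is to combine Proposition \ref{prop:sgdlsmooth} with the Polyak-Łojasiewicz condition from Lemma \ref{lemma:SC-PLC} to obtain a one-step recursion on $\Esp[E(w_i)]$ with step-dependent learning rate, and then use a standard Robbins--Monro style argument to show that this recursion forces $\Esp[E(w_i)] \to 0$, which by strong convexity (and hence continuity of $E$) implies $w_i \to w^*$.

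First, I would apply Proposition \ref{prop:sgdlsmooth} but with the step-dependent $\alpha_i$ in place of $\alpha$. Rearranging (\ref{sgdlsmootheq1}) and using $\Esp[\|\nabla_{mb}f(w_i)\|_2^2 | w_i] \leq \|\nabla f(w_i)\|_2^2 + \sigma^2$ gives
\begin{align*}
\Esp[E(w_{i+1}) \mid w_i] \leq E(w_i) - \frac{\alpha_i}{L}\bigl(1 - \tfrac{\alpha_i}{2}\bigr)\|\nabla f(w_i)\|_2^2 + \frac{\alpha_i^2\sigma^2}{2L}.
\end{align*}
By Lemma \ref{lemma:SC-PLC}, strong convexity yields $\|\nabla f(w_i)\|_2^2 \geq 2\mu E(w_i)$. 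Choosing $i_0$ large enough so that $\alpha_i \leq 1$ for all $i \geq 0$ (so that $1 - \alpha_i/2 \geq 1/2$), we obtain, after taking a full expectation,
\begin{align*}
\Esp[E(w_{i+1})] \leq \bigl(1 - c\,\alpha_i\bigr)\,\Esp[E(w_i)] + c'\,\alpha_i^2,
\end{align*}
for constants $c := \mu/L > 0$ and $c' := \sigma^2/(2L) > 0$.

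Next I would invoke the classical Robbins--Monro/Chung argument for such recursions. Writing $\alpha_i = \beta/(i_0+i)$ with $\beta$ large enough that $c\beta > 1$, I would prove by induction that $\Esp[E(w_i)] \leq K/(i_0+i)$ for a sufficiently large constant $K$. The inductive step reduces to verifying
\begin{align*}
\bigl(1 - \tfrac{c\beta}{i_0+i}\bigr)\frac{K}{i_0+i} + \frac{c'\beta^2}{(i_0+i)^2} \leq \frac{K}{i_0+i+1},
\end{align*}
which, after multiplying through by $(i_0+i)(i_0+i+1)$ and expanding, holds whenever $K \geq c'\beta^2/(c\beta - 1)$. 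This gives $\Esp[E(w_i)] \to 0$ at rate $O(1/i)$.

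Finally I would convert $L^1$ convergence of $E(w_i)$ to $w^*$ to convergence of the iterates themselves. By $\mu$-strong convexity we have $E(w) = f(w) - f(w^*) \geq \tfrac{\mu}{2}\|w - w^*\|_2^2$, so $\Esp[\|w_i - w^*\|_2^2] \leq (2/\mu)\Esp[E(w_i)] \to 0$, yielding $w_i \to w^*$ in mean square (and hence almost surely along a subsequence, upgradable to a.s.\ by a standard supermartingale argument of Robbins--Siegmund applied to the above recursion). The main obstacle I anticipate is the bookkeeping of constants in the induction: one must absorb the loss-guarding branch cleanly (which Proposition \ref{prop:sgdlsmooth} already handles via the $\leq$ at step $\dagger$ of its proof, since the LGL2O criterion ensures $f(w_{i+1}) \leq f(w_i - (\alpha_i/L)\nabla_{mb}f(w_i))$ pathwise), and one must pick $i_0$ large enough that $\alpha_i/2 \leq 1/2$ holds uniformly while simultaneously $c\beta > 1$, which is the delicate step tying the decaying schedule to the PLC contraction constant.
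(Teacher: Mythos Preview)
Your proposal is correct and follows essentially the same route as the paper: combine Proposition~\ref{prop:sgdlsmooth} with the Polyak--\L{}ojasiewicz inequality from Lemma~\ref{lemma:SC-PLC} to obtain a recursion of the form $\Esp[E(w_{i+1})]\leq (1-c\,\alpha_i)\Esp[E(w_i)]+c'\alpha_i^2$, and then close it with an explicit induction showing $\Esp[E(w_i)]=O(1/(i_0+i))$. Your bookkeeping is in fact slightly tidier than the paper's (you track the PLC constant as $c=\mu/L$ and you add the last step---absent in the paper---of passing from $\Esp[E(w_i)]\to 0$ to iterate convergence via $E(w)\geq\frac{\mu}{2}\|w-w^*\|_2^2$, with the Robbins--Siegmund remark for almost-sure convergence).
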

\begin{proof}
From Proposition \ref{prop:sgdlsmooth} and $\mu$-strong convexity (which gives us the Polyak-Łojasiewicz Condition (\ref{PLC})) we have that 
\begin{footnotesize}
\begin{align}
    \Esp\left[E(w_{i+1}) - E(w_i)\middle|w_i\right] & \leq  -\frac{\alpha}{2 \mu L}(1 -\frac{\alpha }{2}) E(w_i)  + \frac{\alpha^2\sigma^2}{2 L}.
\end{align}
\end{footnotesize}
Which we can rewrite as
\begin{align}
\label{eq:sgdrec}
    \Esp\left[E(w_{i+1}) \middle|w_i\right] & \leq  \left(1-\frac{\alpha}{2 \mu L}(1 -\frac{\alpha }{2})\right) E(w_i)  + \frac{\alpha^2\sigma^2}{2 L} \nonumber \\
    & \leq \left(1-\frac{\alpha}{4 \mu L}\right) E(w_i)  + \frac{\alpha^2\sigma^2}{2 L} ,
\end{align}
where the second inequality is valid if $0<\alpha<1$ because then $(1 -\frac{\alpha }{2})> \frac{1}{2}$.

We will now prove by recursion, using inequality (\ref{eq:sgdrec}) that with the appropriate learning rate $\lambda_i=\frac{\alpha_i}{L}$ we have the follow inequality for all $i$:
\begin{align}
    \label{eq:finalsgd}
    \Esp\left[E(w_{i+1}) \right] & \leq \frac{C}{i + i_0 + 1} , 
\end{align}
for positive constants $C = 32 \mu^2 L \sigma^2$ and $i_0$. Which implies that $\substack{\lim \\ i\rightarrow \infty}\Esp\left[E(w_{i+1}) \right]\rightarrow 0$ and thus that $w$ converges to $w^*$ in expectation. 

For $i=0$:
\begin{align*}
    \Esp\left[E(w_{0})\right] & \leq \frac{C}{i_0} = E(w_0) 
\end{align*}
is true if we set $i_0 = C/E(w_0)$.

Now supposing inequality (\ref{eq:finalsgd}) is true up to $i$, let us show it for $i+1$ by using $\alpha_i:= \min(1; 2\mu L; \frac{8\mu L}{i+ i_0})$:
\begin{align*}
    \Esp\left[E(w_{i+1})\right] & \leq \left(1-\frac{\alpha_i}{4 \mu L}\right) \Esp[E(w_i)]  + \frac{\alpha_i^2\sigma^2}{2 L} \\
    & \leq \left(1-\frac{\alpha_i}{4 \mu L}\right) \frac{C}{i + i_0}  + \frac{\alpha_i^2\sigma^2}{2 L} \\
    & \leq \left(1-\frac{2}{i + i_0}\right) \frac{C}{i + i_0}  + \frac{(8 \mu L)^2\sigma^2}{2 L(i+i_0)^2} \\
    & = \left(1-\frac{2}{i + i_0}\right) \frac{C}{i + i_0}  + \frac{C }{(i+i_0)^2} \\
    & = C\frac{i+i_0 - 1}{(i + i_0)^2}\\
    & \leq \frac{C}{i+1 + i_0}, 
\end{align*}
where the last inequality uses $\frac{k-1}{k^2}\leq \frac{1}{k+1}$
\end{proof}

\section{Experiments with Different Learned Optimizer}
\label{sec:different}

This paper tries to address a common problem of learned optimizers: they are not easily predictable and their behaviour can depend on the meta-testing procedure (e.g. on what optimizee and dataset they were meta-trained on). For this reason we show the same experiments as in section \ref{sec:experiments}, but with a different learned optimizer (L2O) that was meta-trained on a \textbf{ConvNet} optimizee and the \textbf{Cifar10} dataset instead of on an MLP with MNIST as in section \ref{sec:experiments}. The purpose of these experiments is to:
\begin{itemize}
\item show that our method (LGL2O) works independently of which learned optimizer is used and that
\item LGL2O acts appropriately in all situations and provides the best of both the learned optimizers and the general optimizer.
\end{itemize}

These experiments follows exactly the setup shown in the Experiments section \ref{sec:experiments}. First, the L2O optimizer was meta-trained to optimize the ConvNet optimizee on the Cifar10 dataset. All the experiment parameters and hyperparameters follow the original experiments exactly, except of the learning rate of the meta-optimizer set to $lr=0.0001$. No fine-tuning was made in order to obtain these results. The meta-training progress is shown in the Figure \ref{fig:ap:meta-learning}.

\begin{figure}[htb]
    \includegraphics[width=\columnwidth]{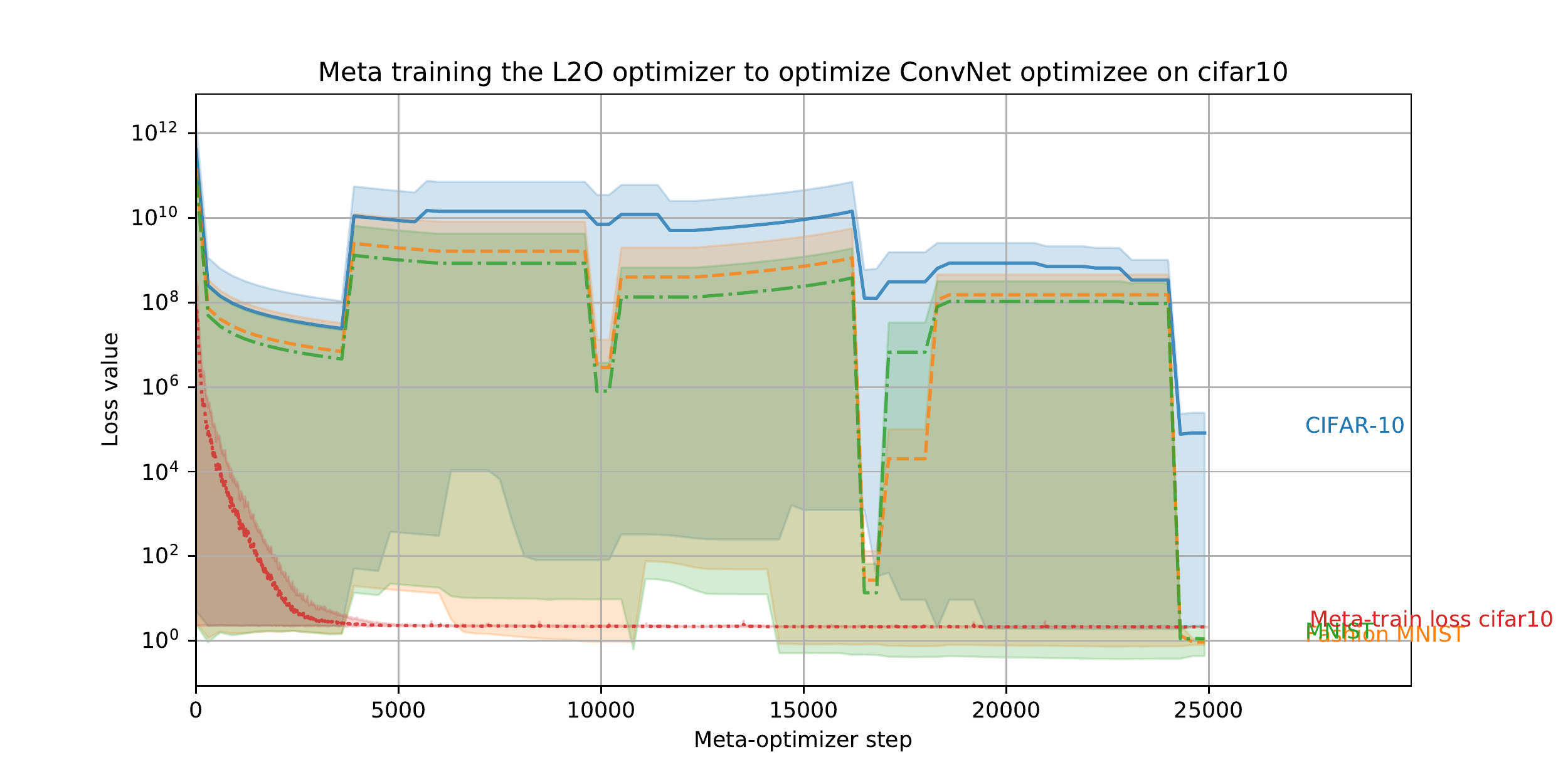}
    \caption{Outer-loop convergence of the learned optimizer on the Cifar10 dataset. The L2O method converges to sub-optimal, but stable results on the train set. During meta-testing, it is apparent that variance between test runs is high, but the optimizer performs comparably well on the test-set and generalizes the behaviour to other datasets. This meta-testing procedure shows only subset of datasets where ConvNet optimizee can be used. Mean and min-max ranges are across 5 runs are shown.}
    \label{fig:ap:meta-learning}
\end{figure}

For clarity, the Table \ref{table:ap:exp} shows the meta-testing experiment settings.

\begin{table}[h!]
\begin{tabular}{llll}
\toprule
\multicolumn{2}{l}{In Distribution?} & \multicolumn{2}{l}{Experiment Type} \\ \cmidrule(lr){1-2}\cmidrule(lr){3-4}
\textbf{Dataset}          & \textbf{Optimizee}         & \textbf{Dataset}               & \textbf{Optimizee}   \\ \toprule
no              & no               & MNIST                 & MLP         \\
no              & yes                & MNIST                 & Conv        \\
no               & no               & Spirals \& Circles    & MLP         \\
yes               & yes                & CIFAR-10              & Conv        \\ 
\end{tabular}
\caption{Overview of the experiments with updated in/out-distribution settings: after the meta-training phase, the ability of the hybrid optimizer LGL2O is evaluated on long rollouts, on in-distribution and out-of-distribution datasets and optimizees.}
\label{table:ap:exp}
\end{table}

\begin{figure}[htb]
    \centering
    \includegraphics[width=\columnwidth]{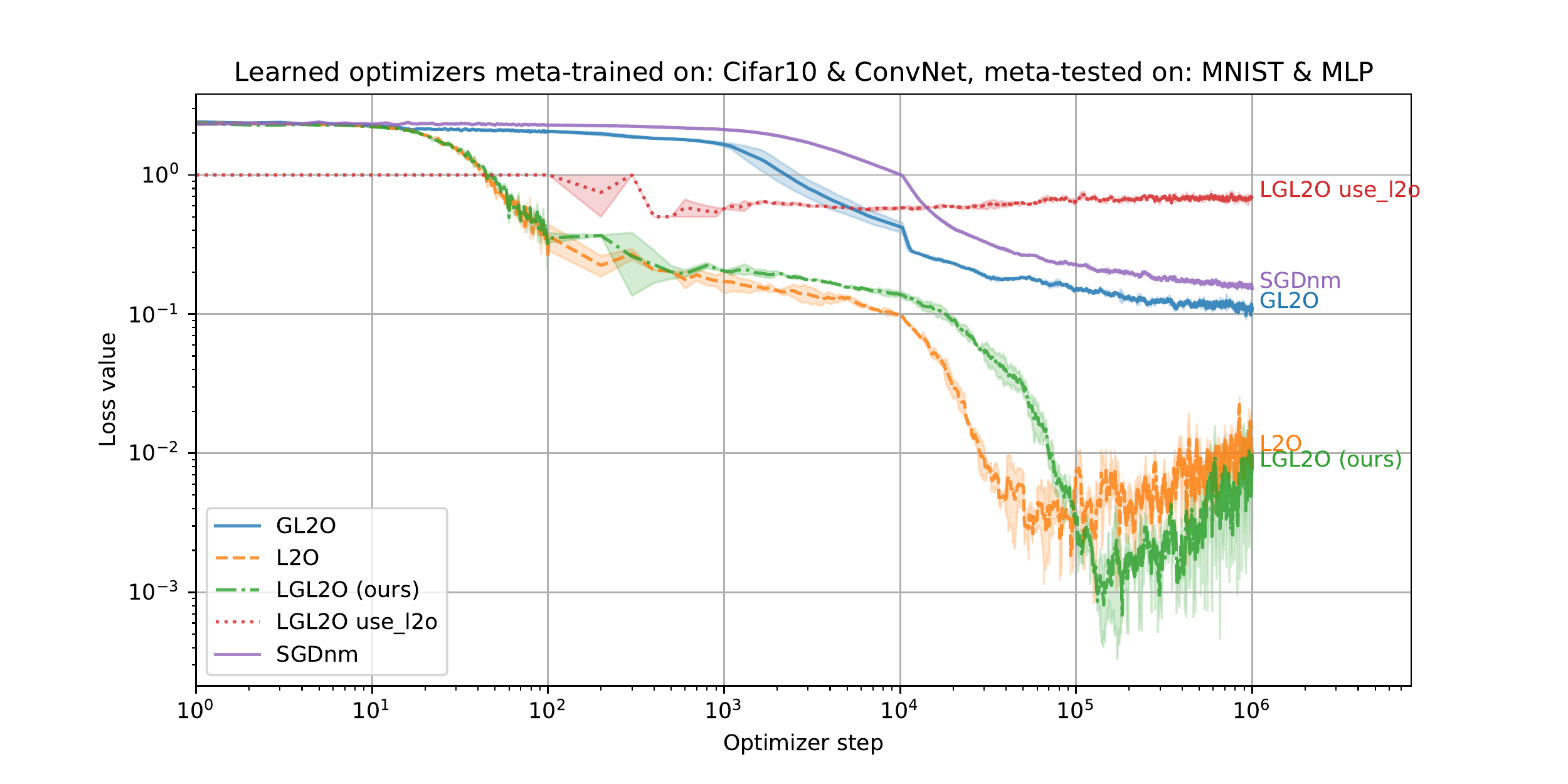}
    \caption{\textbf{Out-of-distribution dataset, out-of-distribution optimizee.} MLP optimizee optimized by different optimizers on the MNIST dataset. The L2O optimizer meta-trained on ConvNet produces very stable behaviour which starts to diverge after around $10^5$ steps. Our LGL2O optimizer is slightly slower, achieves better performance and then starts slowly diverging as well. This is probably caused by noise in the switching mechanism, since it can be seen that the $use\_l2o$ indicates that the guard is switching at high frequency.}
    \label{fig:ap:mnist}
\end{figure}

\begin{figure}[htb]
    \centering
    \includegraphics[width=\columnwidth]{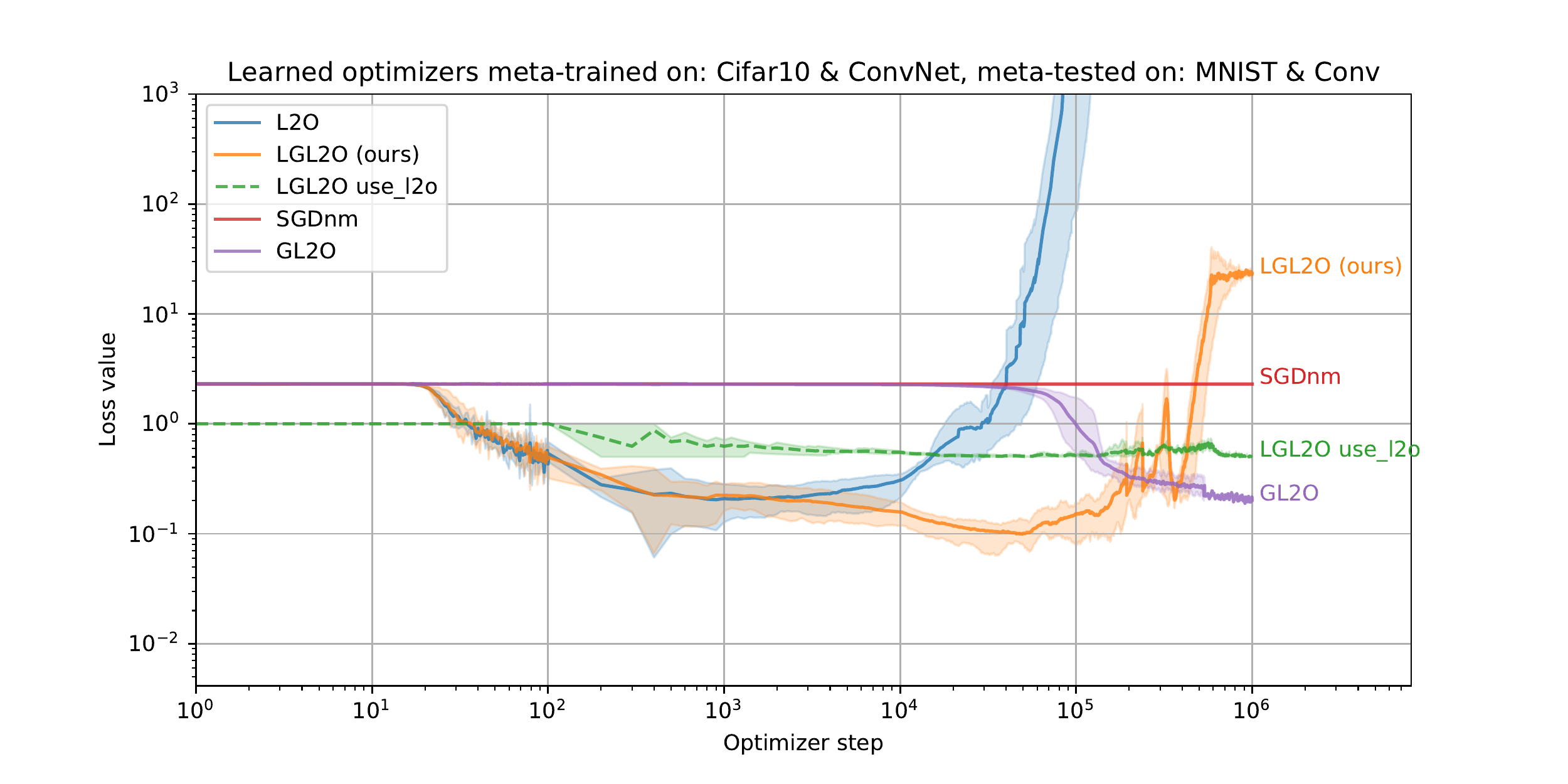}
    \caption{\textbf{Out-of-distribution dataset, in-of-distribution optimizee.} ConvNet optimizee optimized by different optimizers on the MNIST dataset. Here, the L2O diverges significantly, the GL2O counters the tendency of the learned optimizer to diverge. The LGL2O diverges significantly because criterion \ref{lgl2ocriterion} is not reliably evaluated anymore as can be seen by high frequency switching between learned optimizer and fallback optimizer after $10^5$ steps (green line). Close to the minimum of the loss function, 10 mini-batches is not enough anymore to reliably distinguish between the losses given by the updates proposed by the learned optimizer and the fallback optimizer. Until then however, LGL2O improved significantly on both it's constituent parts. }
    \label{fig:ap:mnist_conv}
\end{figure}

\begin{figure}[htb]
    \centering
    \includegraphics[width=\columnwidth]{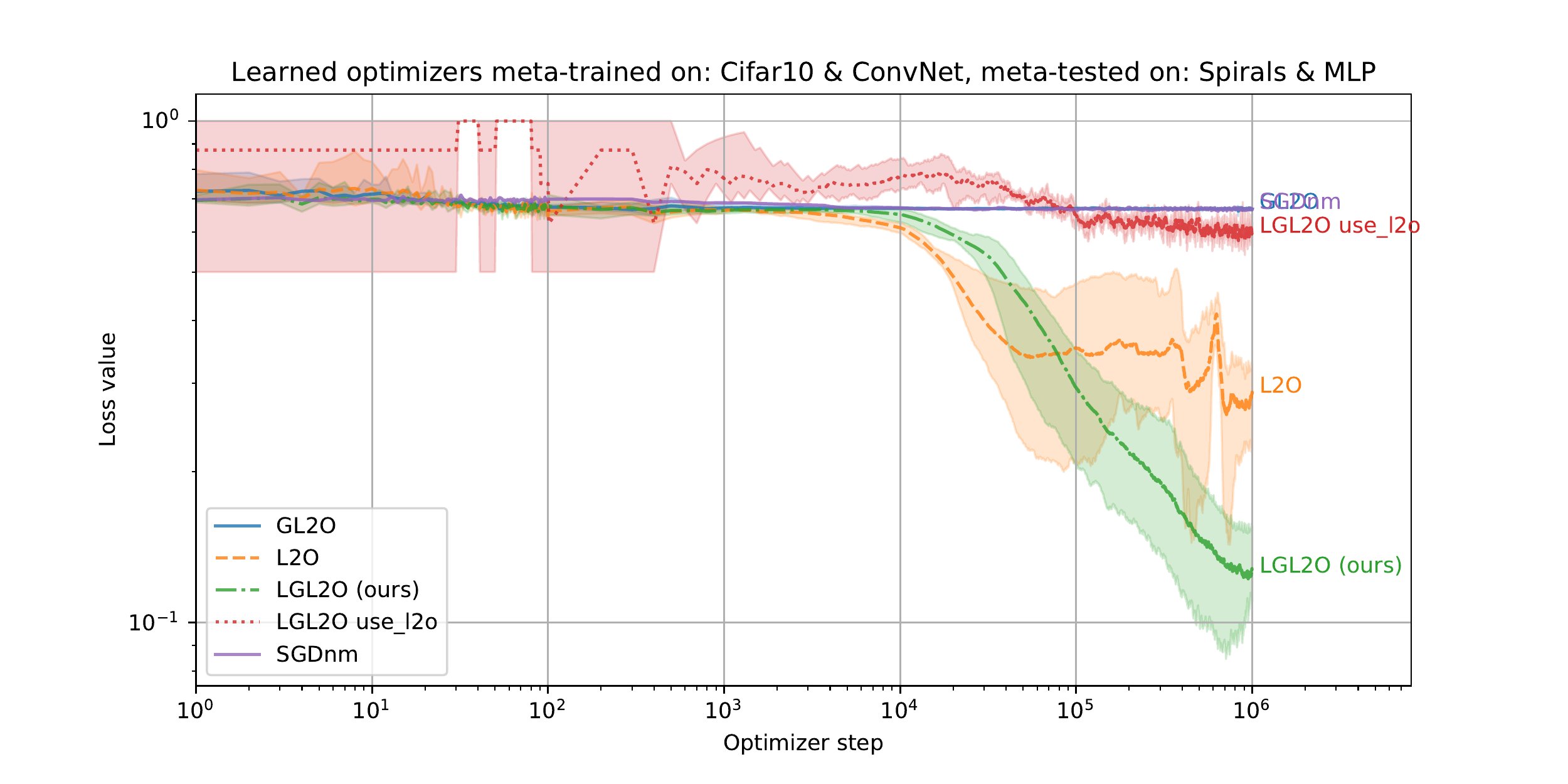}
    \caption{\textbf{Out-of-distribution dataset, out-of-distribution optimizee.} MLP optimizee optimized by different optimizers on the Spirals dataset. The Spirals dataset has a strong local optimum if used with a small MLP. In this case, the SGDnm and GL2O get stuck, while the L2O and LGL2O avoid the local optimum and converge well. In this case, our LGL2O outperforms L2O and keeps converging steadily.}
    \label{fig:ap:spirals_mlp}
\end{figure}

\begin{figure}[htb]
    \centering
    \includegraphics[width=\columnwidth]{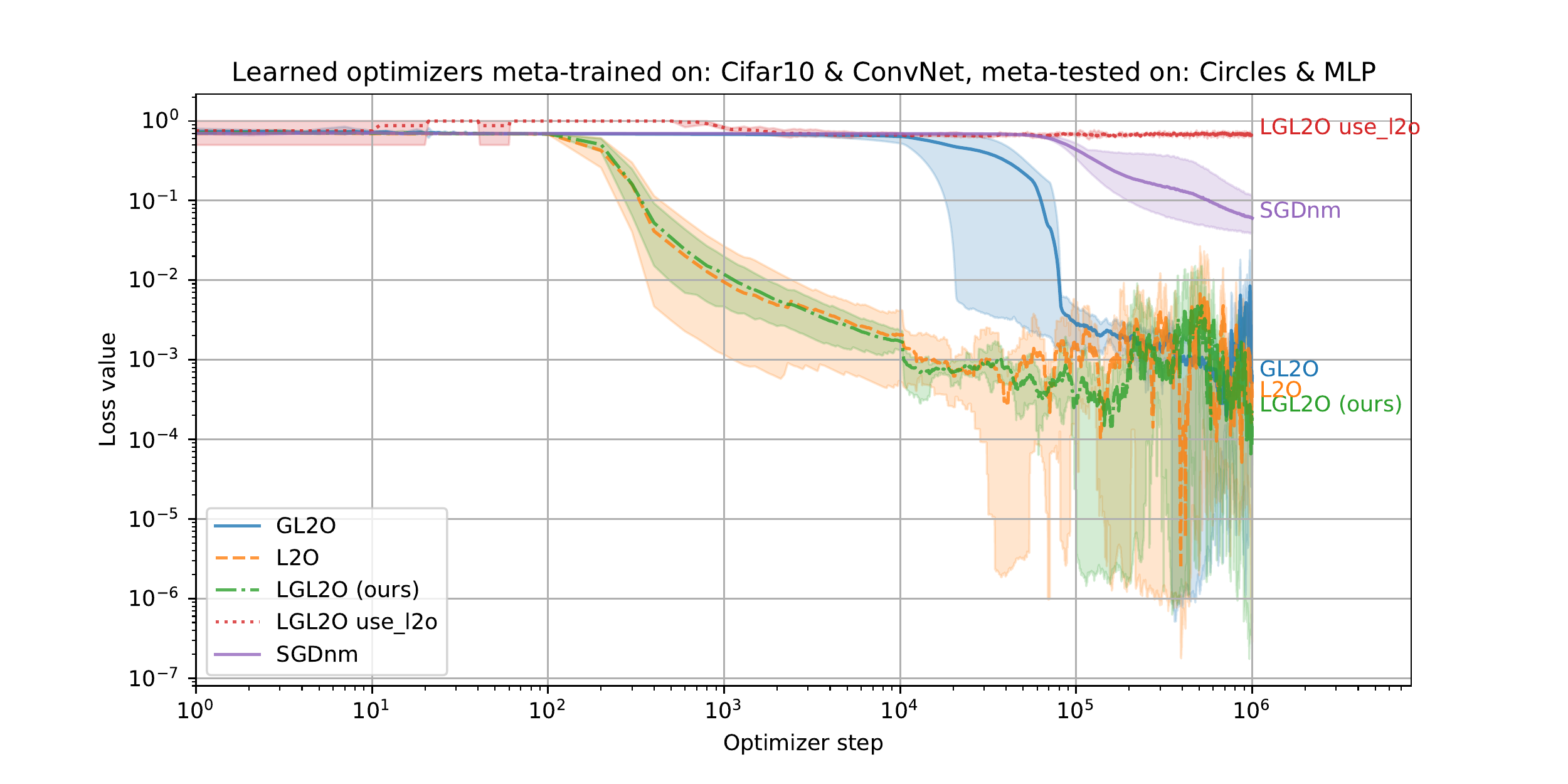}
    \caption{\textbf{Out-of-distribution dataset, out-of-distribution optimizee.} MLP optimizee optimized by different optimizers on the Circles dataset. In this case, all the learned optimizers are faster than the SGDnm baseline and they start to oscillate near optimal loss values. The L2O and LGL2O are faster than GL2O.}
    \label{fig:ap:circles_mlp}
\end{figure}

\begin{figure}[htb]
    \centering
    \includegraphics[width=\columnwidth]{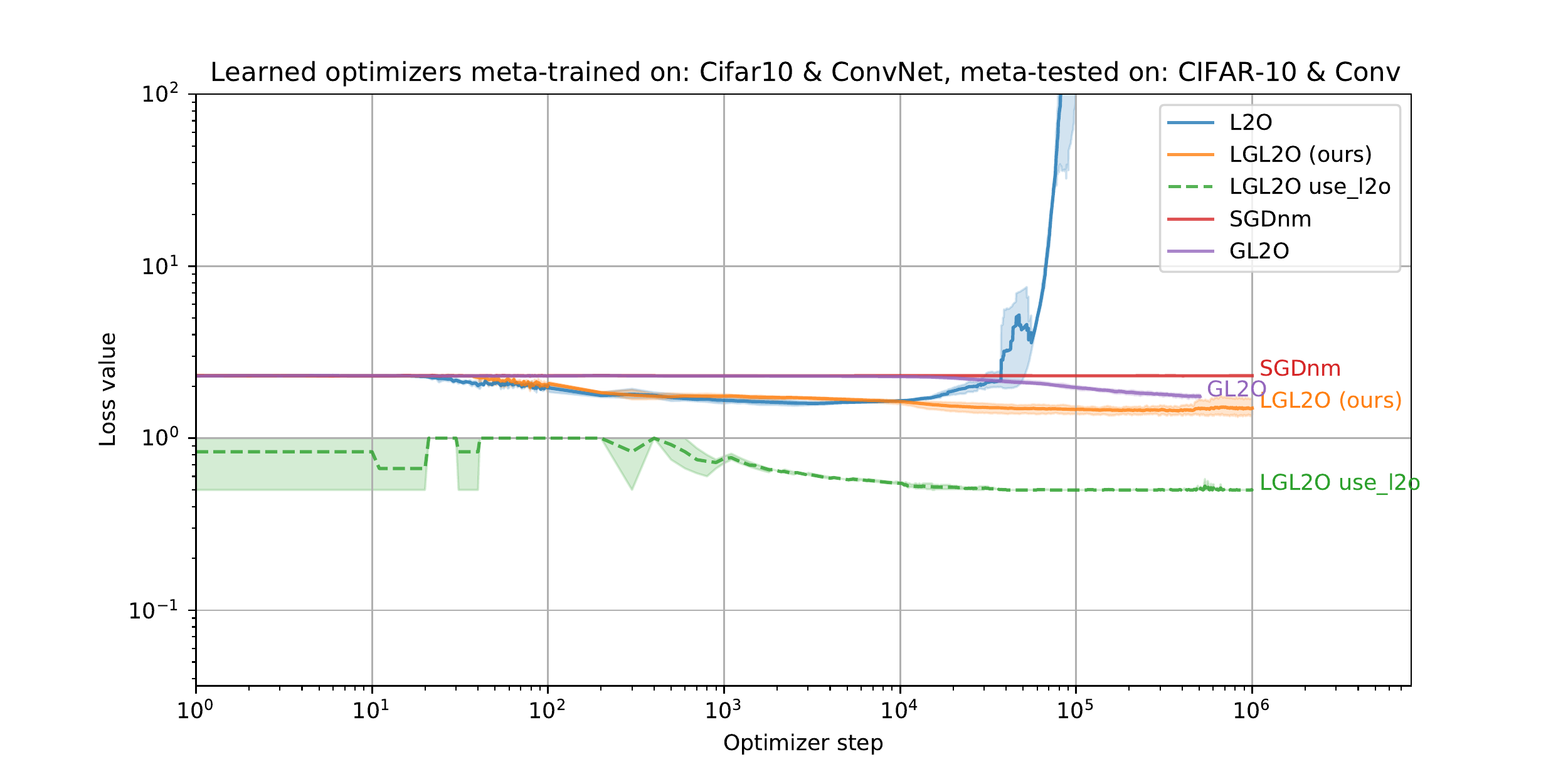}
    \caption{\textbf{In-distribution dataset, in-distribution optimizee.} ConvNet optimizee optimized by different optimizers on the Cifar10 dataset. Here, L2O optimizes well for many steps (as one would hope, being in distribution), but then it gets unstable. The GL2O deals with the instability and works as expected. The LGL2O works even better by benefiting from the excellent performance L2O and then slowly switching to it's fallback optimizer before L2O would start getting unstable. SGDnm, the fallback optimizer, in contrast has very poor performance as a standalone optimizer.}
    \label{fig:ap:cifar_conv}
\end{figure}

\subsection{Discussion}
\label{ap:discussion}

Here, the same meta-training procedure was used on a different optimizee and dataset. At the beginning of the meta-training (see Fig. \ref{fig:ap:meta-learning}), the L2O optimizer weights were initialized randomly. Probably due to the fact that the weights of ConvNet optimizee are shared, the implications of the weight modifications proposed by random L2O on the loss value are big and therefore the loss values explode in just first few optimizer steps. Gradually, the meta-training changes the optimizer to optimize the weights to a stable performance. Despite the fact that absolute values of the loss are rather far from optimal, it can be seen that the optimizer generalizes similar results to the in-distribution meta-test set and to other datasets. 

Then, the learned L2O optimizer\footnote{One of the meta-training runs that achieved good results on Cifar10 at the end.} parameters were used for meta-testing on long rollouts. 

It can be seen that the learned optimizer generalizes from the ConvNet optimizee to \textbf{MLP optimizee} very well. It does not tend to diverge much in very long rollouts, despite that it was trained only on rollouts that were 600 steps long. In these cases, our method (LGL2O) performs better or at least equally well compared to the L2O. It often outperforms the GL2O and it always outperforms the baseline SGNnm. This can be seen especially in the Fig. \ref{fig:ap:spirals_mlp}, where the SGDnm gets stuck in local optimum, L2O starts to plateau after $10^5$ steps, but our LGL2O switches to the fallback optimizer at this point to continue reducing the loss. LGL2O switched at the right time (i.e. after the local optimum was avoided) and continues converging at a faster rate than L2O after that. 

This is obvious in retrospect, but for LGL2O to work, the fallback optimizer cannot diverge. Initially we had used too large a learning-rate for the fallback optimizer which caused LGL2O to explode when it decided to switch the the fallback optimizer when the optimizee was a ConvNet \footnote{For this reason, graphs in Figures \ref{fig:ap:mnist_conv} and \ref{fig:ap:cifar_conv} use SGD $lr=0.0001$ instead of $lr=0.01$ and 100x slower learning rate decay.}. 

Of particular note is the divergence of LGL2O in Fig. \ref{fig:ap:mnist_conv}. This is exactly the risk of which we foretold in the last the paragraph of section \ref{sec:lossguard} and which me mentioned again in the paragraph on the Circles dataset in section \ref{sec:outdistribution}. Close to the minimum of the loss function, 10 mini-batches is not enough anymore to reliably distinguish between the losses given by the updates proposed by the learned optimizer and the fallback optimizer. We can see that that is the problem because of the high frequency switching between learned optimizer and fallback optimizer after $10^5$ steps (green line in Fig. \ref{fig:ap:mnist_conv}). As previously mentioned, at that point (past $10^5$ updates in this case) one would need to increase the hyperparameter $n_c$. Doing so in adaptive manner as suggested in section \ref{sec:outdistribution} would probably be best. But in all experiments in this paper, we did not hyperparameter search for the Guard hyperparameters and simply always used the same ones $n_t=n_c=10$.

\section{Hyperparameters}
\label{sec:hyperparams}

Hyperparameters for baseline learning algorithms (as well as guards) were partially chosen from experience and partially found by grid search. In all experiments, the guard has hyperparameters identical to the corresponding baseline - this is SGDnm in most cases. In all cases, our batch-size was 128 samples per mini-batch.

Hyperparameters of Adam were found for each combination of optimizee (MLP or ConvNet) and dataset found from the set [0.0001, 0.001, 0.01, 0.1] over 300 optimization steps. In case of SGD, the learning rate was set based on practical experience to 3.0 and (the optional) momentum to 0.9. 

In order to fulfill assumptions of convergence guarantee, the following decaying learning rate schedule was used:
\begin{align*}
    \text{lr}(t) = \frac{\text{lr}_0}{(\frac{t}{\text{decay-time-scale}} + 1)^{1.5}} , 
\end{align*}

where $t$ is the optimization step number and where the power of 1.5 was chosen because for convergence guarantees for SGD the power must be strictly greater than 0 and strictly less than 2, and 1.5 is a number in that range.

For SGDnm, grid search for each combination of optimizee and dataset was performed to determine the value of \textit{decay-time-scale} from the following set [2000, 5000, 10000, 20000, 50000], over 300 optimization steps, the same value of \textit{decay-time-scale} was used also for the SGD. LGL2O, the \textit{decay-time-scale} was set a constant value of 50'000 for all experiments without doing a grid search; that value was arrived at simply because SGD tended to plateau roughly at the $50'000$\textsuperscript{th} time step when running experiments at constant learning rates.

The best hyperparameters found using grid-search and then used as baselines (and optionally guards) were:

\begin{center}
\begin{tabular}{ |c|c|c| } 
 \hline
 & Adam lr & SGDnm decay \\ 
 \hline
 MLP MNIST & 0.001 & 50'000  \\ 
 \hline
 Conv MNIST & 0.01 & 20'000  \\ 
 \hline
  MLP Circles & 0.01 & 50'000 \\ 
 \hline
  MLP Spirals & 0.01 & 50'000 \\ 
  \hline
   Conv CIFAR10 & 0.001 & 20'000  \\ 
 \hline
\end{tabular}
\end{center}

The gradients of the L2O LSTM were truncated every 20 steps. In LGL2O 10 steps of L20 were compared with 10 steps of SGD on 10 "validation" mini-batches (pulled from the training data but different from the mini-batches on which the 20 steps were just made) and whichever did better on those 10 validation mini-batches got to update the optimizee. This helped avoiding effect of noise in the fitness evaluation and increased the stability of the LGL2O algorithm significantly. 

For GL2O we used $\alpha=0.99$ $\theta=0.9$, $m=30$, and used the exponentially moving average series.

\section{Randomness}
\label{sec:randomness}
Each experiment was run using 5 randomly chosen seeds. All neural network were initialized using the default PyTorch initializations. 

\section{Hardware and Software}
\label{sec:hardandsofware}
All experiments were code in Python 3.9 with PyTorch 1.8.1 on CUDA 11.0. Every run was run on a single NVIDIA GPU with a memory of between 4Gb and 12 Gb. Running one seed of all the optimization algorithms on one dataset takes roughly 5 hours of wallclock time depending on the specific GPU and usage level of the machine it is running on. The SciKit learn datasets were loaded from SciKit-Learn version 0.24.0. 

\section{Instability of Convergence}
\label{sec:stability}
This section is a supplementary section added after the publication of this paper to address an issue that was left unaddressed at publication time.

A few figures throughout the paper (specifically, Fig. \ref{fig:circles_mlp}, \ref{fig:ap:mnist}, and \ref{fig:ap:circles_mlp}) seem to show some instability of LGL2O at the very end of training. In section \ref{sec:outdistribution} and the legend of Fig. \ref{fig:circles_mlp}, we hypothesize that this is due to the approximation of the loss with only 10 batches not being good enough anymore when the the two points in criterion \ref{lgl2ocriterion} are close to enough to a local minimum and that thus increasing hyperparameters $n_t$ and $n_c$ would solve the instability. We have since checked this to see if this is indeed the case, and we see that this is correct. By increasing the hyperparameters $n_t$ and $n_c$, we regain stability at the end of training as can be seen in figures \ref{fig:circles_mlp_bb}, \ref{fig:ap:mnist_bb}, and \ref{fig:ap:circles_mlp_bb}.

\begin{figure}[htb]
    \centering
    \includegraphics[width=\columnwidth]{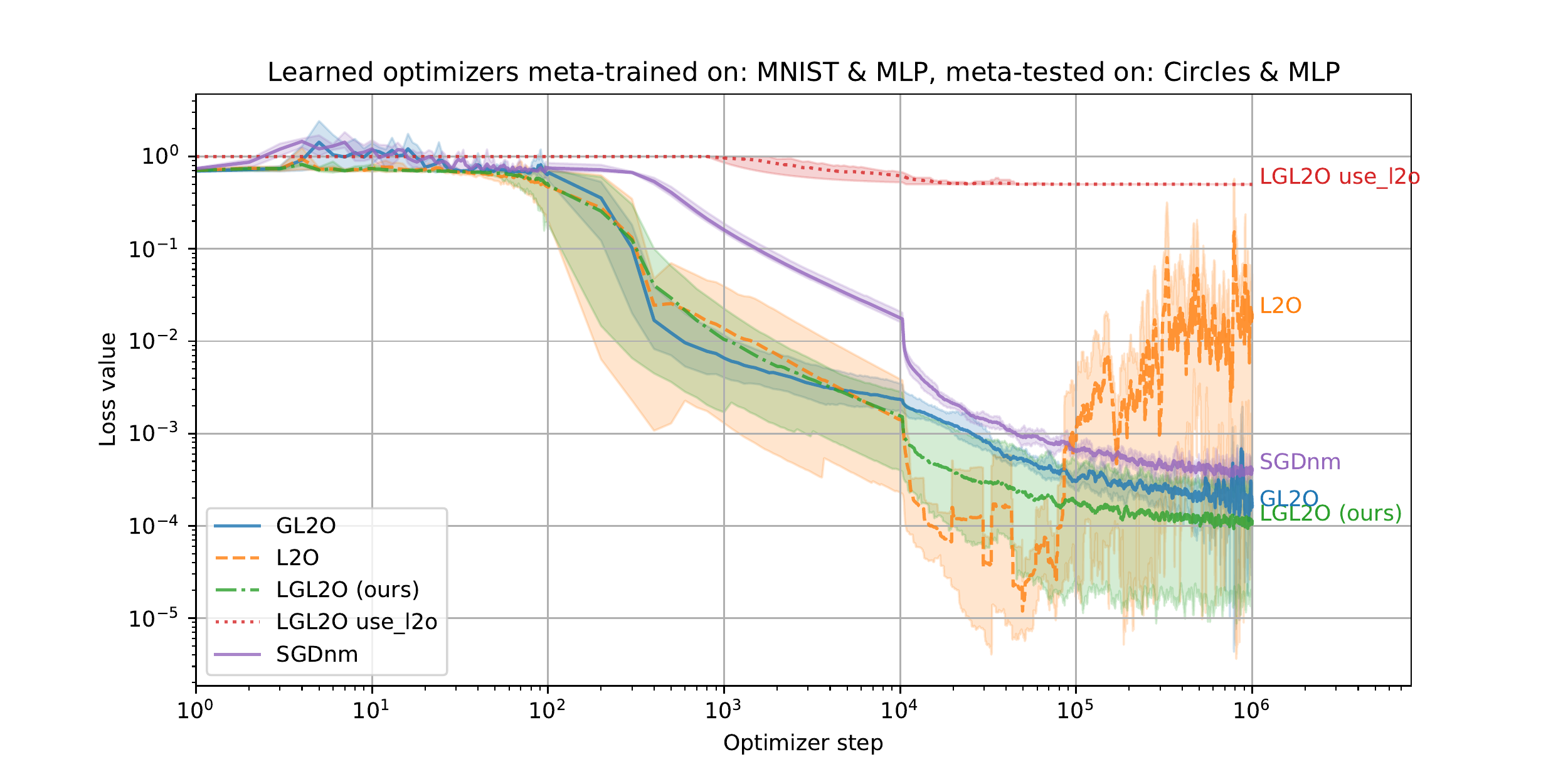}
\caption{\textbf{Out-of-distribution dataset, in-distribution optimizee.} Same experiment as in Fig. \ref{fig:circles_mlp} except that we set the hyperparameters $n_t$ = $n_c$ = 2 000 instead of 10 in the original. This removes instabilities.}
    \label{fig:circles_mlp_bb}
\end{figure}

\begin{figure}[htb]
    \centering
    \includegraphics[width=\columnwidth]{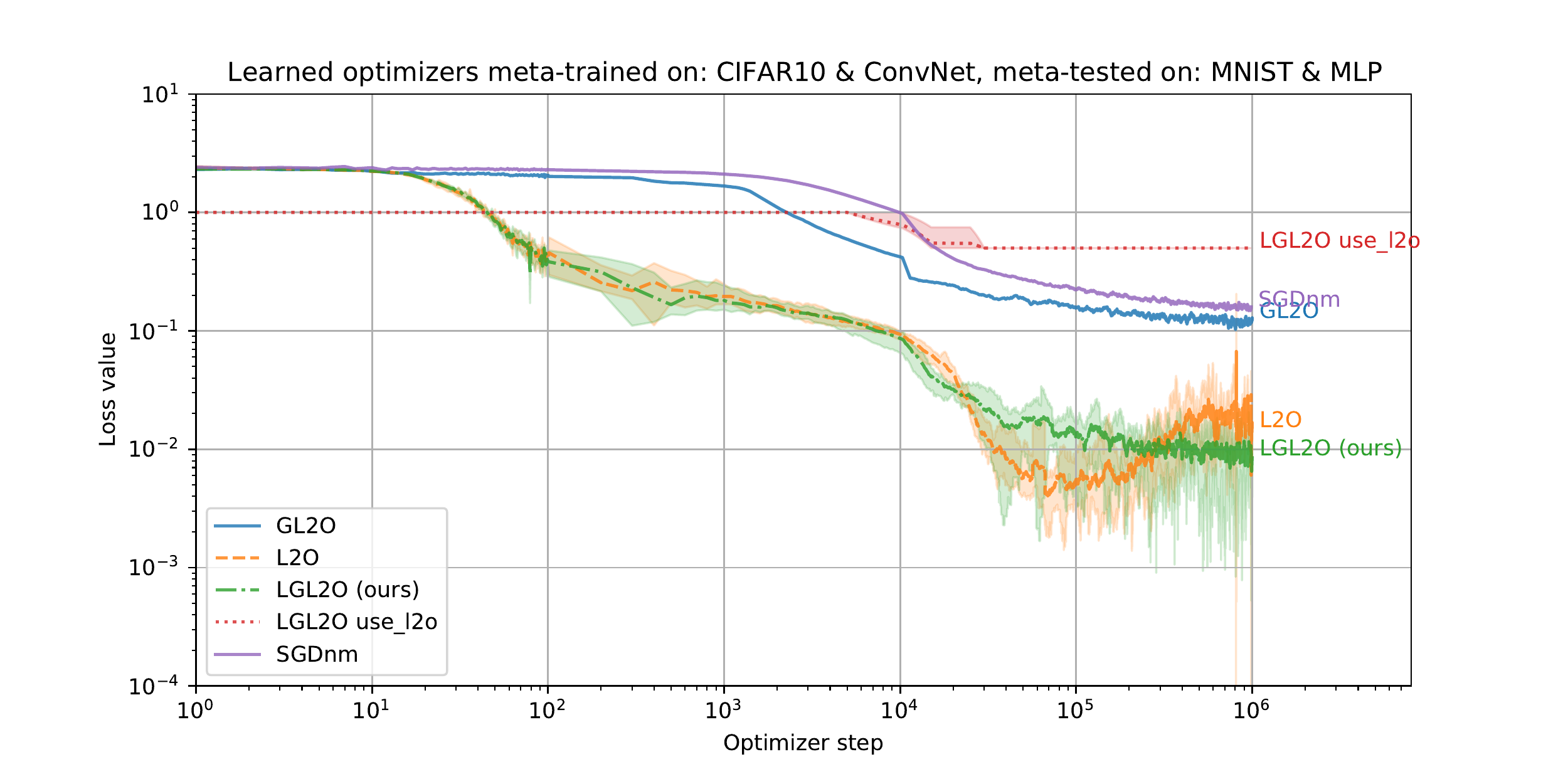}
    \caption{\textbf{Out-of-distribution dataset, out-of-distribution optimizee.} Same experiment as in Fig. \ref{fig:ap:mnist} except that we set the hyperparameters $n_t$ =5 000, $n_c$ = 2 000 instead of $n_t$ = $n_c$ = 10 in the original. This removes instabilities.}
    \label{fig:ap:mnist_bb}
\end{figure}

\begin{figure}[htb]
    \centering
    \includegraphics[width=\columnwidth]{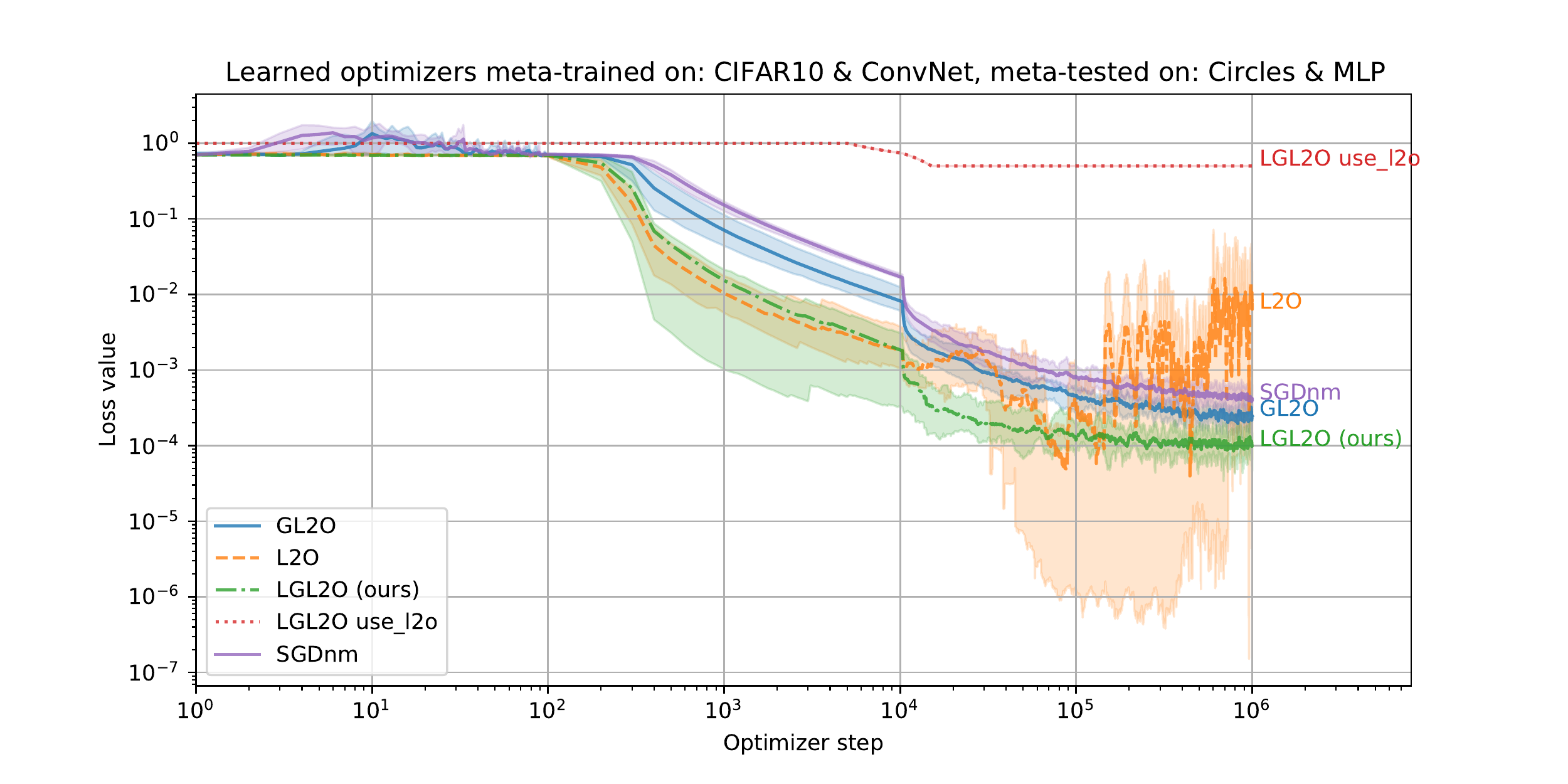}
    \caption{\textbf{Out-of-distribution dataset, out-of-distribution optimizee.} Same experiment as in Fig. \ref{fig:ap:circles_mlp} except that we set the hyperparameters $n_t$ =5 000, $n_c$ = 2 000 instead of $n_t$ = $n_c$ = 10 in the original. This removes instabilities.}
    \label{fig:ap:circles_mlp_bb}
\end{figure}
\end{document}